\definecolor{citeblue}{RGB}{0, 90, 160}
\title{No-Regret Thompson Sampling for Finite-Horizon Markov Decision Processes with Gaussian Processes}
\author{Jasmine Bayrooti \\
University of Cambridge\\
\texttt{jgb52@cam.ac.uk} \\
\And
Sattar Vakili \\
MediaTek Research \\
\texttt{sattar.vakili@mtkresearch.com}
\AND
Amanda Prorok \\
University of Cambridge \\
\texttt{asp45@cam.ac.uk}
\And
Carl Henrik Ek \\
University of Cambridge \\
Karolinska Institutet \\
\texttt{che29@cam.ac.uk} \\
}
\definecolor{darkblue}{RGB}{0, 55, 120}
\definecolor{jadegreen}{RGB}{0, 102, 77}
\definecolor{carmine}{RGB}{150, 0, 24}
\definecolor{amber}{RGB}{204, 85, 0}
\definecolor{burntorange}{rgb}{0.8, 0.33, 0.0}
\definecolor{darkred}{rgb}{0.55, 0.0, 0.0}
\newtheorem{theorem}{Theorem}
\newtheorem{lemma}{Lemma}
\newtheorem{remark}{Remark}
\newtheorem{definition}{Definition}
\newtheorem{assumption}{Assumption}
\newtheorem{corollary}{Corollary}
\newcommand{\argmax}{\mathop{\mathrm{argmax}}} 
\newcommand{\cS}{\mathcal{S}}
\newcommand{\cA}{\mathcal{A}}
\newcommand{\cZ}{\mathcal{Z}}
\newcommand{\cO}{\mathcal{O}}
\newcommand{\cE}{\mathcal{E}}
\newcommand{\cI}{\mathcal{I}}
\newcommand{\bK}{\mathbf{K}}
\newcommand{\bI}{\mathbf{I}}
\newcommand{\bA}{\mathbf{A}}
\newcommand{\Rr}{\mathbb{R}}
\newcommand{\EE}{\mathbb{E}}
\newcommand{\fR}{f_{\mathrm{R}}}
\newcommand{\fS}{f_{\mathrm{S}}}
\newcommand{\Reg}{\text{Regret}}
\newcommand{\gp}{\text{GP}}
\newcommand{\op}{\text{op}}
\begin{document}

\maketitle

\begin{abstract}

Thompson sampling (TS) is a powerful and widely used strategy for sequential decision-making, with applications ranging from Bayesian optimization to reinforcement learning (RL). Despite its success, the theoretical foundations of TS remain limited, particularly in settings with complex temporal structure such as RL. We address this gap by establishing no-regret guarantees for TS using models with Gaussian marginal distributions. Specifically, we consider TS in episodic RL with joint Gaussian process (GP) priors over rewards and transitions. We prove a regret bound of $\mathcal{\tilde{O}}(\sqrt{KH\Gamma(KH)})$ over $K$ episodes of horizon $H$, where $\Gamma(\cdot)$ captures the complexity of the GP model. Our analysis addresses several challenges, including the non-Gaussian nature of value functions and the recursive structure of Bellman updates, and extends classical tools such as the elliptical potential lemma to multi-output settings. This work advances the understanding of TS in RL and highlights how structural assumptions and model uncertainty shape its performance in finite-horizon Markov Decision Processes.
\end{abstract}

\section{Introduction}

Sequential decision-making under uncertainty lies at the core of many machine learning systems, from robotics~\citep{kober2013reinforcement} and chip design~\citep{mirhoseini2021graph} to large language models~\citep{ouyang2022training}. In these settings, an agent must make a series of decisions, balancing exploration to learn about the environment with exploitation to act effectively. A central question is: how should an agent leverage uncertainty to act optimally over time?

Thompson sampling (TS) is a widely used and principled approach for sequential decision-making that naturally balances exploration and exploitation through posterior sampling~\citep{thompson1933likelihood}. The idea is to sample a model from the posterior and select actions that are optimal under that model. The inherent randomness in TS aligns exploration with the agent’s uncertainty and can lead to more efficient learning compared to methods based on confidence bounds~\citep{russo2018tutorial}. TS underpins a range of applications including bandits~\citep{chapelle2011empirical, kaufmann2012thompson, russo2018tutorial}, Bayesian optimization~\citep{srinivas2009gaussian, chowdhury2017kernelized, vakili2021scalable}, and reinforcement learning (RL)~\citep{sasso2023posterior, osband2017posterior, bayrooti2024efficient}. Empirically, TS has demonstrated strong performance across domains and is widely adopted in practice. Theoretical guarantees for TS have been well-developed in multi-armed bandit settings~\citep{agrawal2012analysis, kaufmann2012bayesian} and have also been extended to RL settings~\citep{osband2013more, osband2017posterior, chowdhury2019online, dann2021provably}. However, existing analyses in RL typically rely on discrete state-action spaces, assume linear or kernelized dynamics, or yield regret bounds that scale poorly with state dimensionality. Establishing tight regret bounds for TS in continuous-state Markov Decision Processes (MDPs) without strong structural assumptions remains an important open problem.

In this work, we take a step toward closing this gap by studying TS for sequential decision-making under general, continuous models of the environment. We focus on RL in finite-horizon MDPs, where both rewards and transitions are jointly modeled using a multi-output GP, as proposed by \citet{bayrooti2024efficient}. This approach enables the agent to model correlations across different components of the environment in a flexible and data-efficient manner. We consider an episodic MDP with $K$ episodes of horizon $H$, where at the start of each episode, the agent samples a realization from the GP posterior and computes an optimal policy with respect to this sample. Regret is defined as the cumulative loss in value relative to the optimal policy. We provide a stylized analytical upper bound on the regret, showing its dependence on the number of episodes, the horizon, and the complexity of the GP kernel. Our analysis highlights the key theoretical challenges of applying TS in sequential settings, particularly due to the recursive and compositional nature of value functions in RL.

\subsection{Contributions}

We establish a sublinear regret bound for TS in model-based RL under GP models, an approach we refer to as \emph{Reinforcement Learning with GP Sampling (RL-GPS)}. We prove a regret bound of $\mathcal{\tilde{O}}(\sqrt{KH\Gamma(KH)})$ over $K$ episodes of horizon $H$, where $\Gamma(\cdot)$ captures the complexity of the GP model. Sublinear regret in $K$ implies that RL-GPS asymptotically matches the performance of the optimal policy, referred to as \emph{no-regret} learning~\citep{srinivas2009gaussian, pmlr-v31-agrawal13a}.

Our theoretical analysis introduces several novel intermediate contributions that are essential to deriving the final regret bound. Extending TS regret guarantees to RL presents two main challenges: (i) the optimal value function is a recursive composition of GPs which is not a GP itself~\citep{damianou2013deep}; and (ii) TS operates on proxy models induced by Bellman updates, rather than sampling directly from the posterior over the optimal value function. To address (i), we derive high-probability confidence bounds for compositional functions of GPs, formalized in Theorem~\ref{the:v_conf_bound}, which allow us to control the error that accumulates through Bellman recursion. Building on these bounds, we obtain high-probability confidence intervals for recursive value functions in episodic MDPs (Corollary \ref{cor:prob_conf}) that links the sampled proxy to the true value. To address (ii), we bound the regret in terms of cumulative posterior uncertainty via a new multi-output \emph{elliptical potential lemma} (Lemma~\ref{lem:multi_gp_potential}) that jointly tracks correlated uncertainty across multiple outputs. This lemma provides tighter regret guarantees than  naively applying the standard versions~\citep{srinivas2009gaussian, NIPS2011_e1d5be1c, carpentier2020elliptical} independently to each output dimension by leveraging the correlation structure. We further introduce a delayed-update lemma (Lemma \ref{lem:delayed_multioutput_ep}) that accounts for model updates at the end of episodes, yielding an improved dependence on the horizon $H$.

Finally, we conduct controlled experiments designed to mirror the theoretical assumptions and validate our regret bounds. Our results confirm sublinear cumulative regret across a range of environments, including GP-sampled MDPs and sparse navigation tasks. Furthermore, we empirically illustrate how the choice of GP kernel affects learning efficiency, with smoother kernels such as RBF leading to faster regret decay in smooth environments, and rougher Mat\'ern kernels outperforming in sparse settings. These findings are consistent with the theoretical dependence on model complexity and highlight the importance of model selection in practical applications.

\subsection{Related work}

\paragraph{RL with TS.} The theoretical performance of Thompson sampling (TS) has been extensively studied in the bandit setting, where it achieves near-optimal regret bounds and strong empirical performance~\citep{agrawal2012analysis, agrawal2013thompson, kaufmann2012thompson, korda2013thompson, russo2014learning, russo2018tutorial, kveton2020randomized}. In the GP bandit setting, \citet{chowdhury2017kernelized} provided regret bounds for TS under kernel-based assumptions on the target functions, introducing techniques that our analysis builds on (see Step 2 in Section~\ref{sec:anal} for details). Extending TS to RL introduces new challenges due to the recursive structure of value functions and the dependence on both states and actions. Several works have established foundational regret guarantees for TS in finite MDPs~\citep{osband2013more, osband2014near, osband2016generalization, russo2019worst}. In particular, Posterior Sampling for Reinforcement Learning (PSRL)~\citep{osband2013more, osband2017posterior} achieves $\mathcal{\tilde{O}}(H\sqrt{SAT})$ Bayesian regret, where $H$ is the horizon, $S$ the number of states, $A$ the number of actions, and $T$ the total number of steps. Building on their analyses, \citet{fan2018efficient} studied TS in continuous MDPs and established Bayesian regret bounds. \citet{dann2021provably} also considered continuous MDPs, but under linearity assumptions, and derived tighter regret guarantees although their bounds scale poorly with state dimensionality. In the kernelized RL setting, \citet{chowdhury2019online} analyzed TS where the transition probability distribution is assumed to be a fixed function in a reproducing kernel Hilbert space (RKHS) and derived regret bounds that depend on the maximum information gain of the kernel.

Our work differs fundamentally from these approaches in both modeling and analysis. We model the reward and transition functions jointly as a multi-output GP and our analysis provides high-probability bounds that hold uniformly across all problem instances, unlike Bayesian regret bounds that are averaged over the distribution of all problems. Consequently, our approach requires different proof techniques and much of the analysis we present is novel, including Theorem~\ref{the:v_conf_bound} on confidence bounds for composed GPs, Corollary~\ref{cor:prob_conf} on high-probability value function bounds, and Lemmas~\ref{lem:multi_gp_potential} and~\ref{lem:delayed_multioutput_ep} on multi-output elliptical potential bounds. Our assumptions are also more mild than in prior analyses. For example, PSRL~\citep{osband2013more, osband2017posterior} assumes finite state and action spaces, linear MDP methods~\citep{dann2021provably, jin2020provably} assume linearity in both rewards and transitions, and RKHS-based approaches~\citep{chowdhury2019online} assume that the transition dynamics are modeled as fixed functions within a known RKHS, effectively treating each component independently. In contrast, our multi-output GP framework flexibly captures correlations between reward and transition without requiring assumptions on discretization, linearity, or independence.

\paragraph{Episodic MDP.} The episodic MDP framework is a central setting for RL. Sublinear regret bounds have been established for Upper Confidence Bound (UCB) based methods in tabular finite-horizon MDPs~\citep{azar2017minimax, jin2018q, zanette2019tighter}. Subsequent works have developed regret analyses for UCB-style methods with additional structural assumptions, including linear~\citep{jin2020provably} and kernelized MDPs~\citep{chowdhury2019online, yang2020reinforcement, vakili2023kernelized}. Additionally, \citet{curi2020efficient} introduced an optimistic algorithm for continuous state-action MDPs and established regret guarantees under standard assumptions for GP models. These approaches all rely on optimism via constructed confidence sets to guide exploration. In contrast, TS offers an exploration approach that avoids explicit construction of confidence sets and has been less theoretically studied in complex RL settings.

\paragraph{Broader RL settings.} Beyond episodic MDPs, it is also common to study performance in infinite-horizon discounted~\citep{puterman2014markov} and average-reward settings~\citep{auer2008near, vakili2024kernel}. In the discounted setting, the contraction properties of the Bellman operator enable efficient learning. For example, \citet{ouyang2017learning} leverages these properties to dynamically adapt episode lengths and demonstrate sublinear regret for TS. In the infinite-horizon average-reward setting, regret bounds have been established for tabular communicating MDPs with finite diameter using UCB-based strategies~\citep{auer2008near, JMLR:v11:jaksch10a, agrawal2017optimistic, wei2021learning}. More recent works have analyzed regret in structured infinite-horizon settings, each addressing different challenges. \citet{wu2022nearly} provided nearly minimax-optimal guarantees for model-based linear mixture MDPs with known feature mappings. \citet{ghosh2023achieving} established sublinear regret bounds using model-free methods with linear function approximation in weakly communicating MDPs. \citet{NEURIPS2024_8926246e} studied continuous, nonlinear dynamical systems in nonepisodic settings and established sublinear regret bounds in terms of maximum information gain under continuity and bounded energy assumptions. In general, infinite-horizon and average-reward settings require structural assumptions such as finite diameter, weakly communicating structure, or continuity and boundedness to obtain theoretical guarantees and the resulting regret bounds often depend explicitly on these properties. By focusing on the episodic setting, we sidestep these complications and exploit the finite-horizon structure to develop a regret analysis using GP-based recursive bounds.



\section{Problem formulation}\label{sec:prob_form}

An episodic MDP is defined by the tuple $\left(\cS, \cA, \fR, \fS, H \right)$, where $\cS\subset \Rr^{d_\text{S}}$ is the state space, $\cA\subset\Rr^{d_\text{A}}$ is the action space, and $H$ is the episode length. The reward function is $\fR:\cS\times \cA\mapsto \Rr$, and the state transition function is $\fS:\cS\times \cA\mapsto \cS$. 

The policy $\pi = \{\pi_h:\cS\mapsto\cA\}_{h=1}^H$ specifies the action $\pi_h(s)$ the agent takes in state $s$ at step $h$. At the start of each episode $k=1,2, \dots, K$, the environment selects an initial state $s_{1,k}$ and the agent determines a policy $\pi_k=\{\pi_{h,k}\}_{h=1}^H$. At each step $h$ of the episode, the agent observes the state $s_{h,k}$ and selects action $a_{h,k} = \pi_{h,k}(s_{h,k})$. The agent then receives reward $\fR(s_{h,k},a_{h,k})$ and transitions to the new state $s_{h+1,k} = \fS(s_{h,k}, a_{h,k})$.

In an episodic MDP, the agent aims to maximize the cumulative reward collected over an episode. To formalize this, we define the \emph{value function} of a policy $\pi$ as the expected total reward obtained when starting at state $s$ at step $h$ and following $\pi$ thereafter, where the expectation is taken over the trajectory $\{(s_{h'}, a_{h'})\}_{h'=h}^H$ induced by the policy $\pi$:
\begin{equation}
V^\pi_h(s) = \EE \left[ \sum_{h' = h}^{H} \fR(s_{h'}, a_{h'}) \mid s_h = s \right], \quad \forall s \in \cS, \ h \in [H].
\end{equation}
The associated \emph{state-action value function} is defined as:
\begin{equation}
Q^\pi_h(s,a) = \EE \left[ \sum_{h' = h}^{H} \fR(s_{h'}, a_{h'}) \mid s_h = s, a_h = a \right].
\end{equation}
We assume the existence of an optimal policy $\pi^\star$ that maximizes the expected total reward from any state and time step. The optimal value and optimal state-action value functions are defined as:
\begin{equation}
V^\star_h(s) = \max_{\pi} V^\pi_h(s), \quad Q^\star_h(s,a) = \max_{\pi} Q^\pi_h(s,a).
\end{equation}
The optimal value function satisfies the {Bellman optimality equation}:
\begin{equation}
Q^\star_h(s,a) = \fR(s,a) +   V^\star_{h+1}(\fS(s,a)),\quad V^\star_h(s) = \max_{a \in \cA} Q^\star_h(s,a),
\end{equation}
with $V^\star_{H+1}(s) = 0$ for all $s \in \cS$.
An RL algorithm aims to find a near-optimal policy while interacting with the environment. The \emph{regret} over $T$ timesteps is defined as:
\begin{equation}
\Reg(T) = \sum_{k=1}^{K} (V^\star_1(s_{1,k}) - V^{\pi_k}_1(s_{1,k})),
\label{eq:regret_def}
\end{equation}
where $\pi_k$ is the policy executed by the agent in episode $k$, and $s_{1,k}$ is the initial state of that episode, and we use $T=KH$ for the total number of steps.

\paragraph{Gaussian process modeling.} GPs specify distributions over the space of functions, offering calibrated uncertainty estimates that can be leveraged for exploration and decision making. In the single-output case, we model an unknown function $f: \cZ \to \mathbb{R}$ as a Gaussian process:
\begin{equation}
\label{eq:gp_def}
f \sim \gp(0, k),
\end{equation}
with a scalar-valued kernel $k: \cZ \times \cZ \to \mathbb{R}$. Given $n$ noisy observations $\{(z_i, y_i)\}_{i=1}^n$ with $y_i = f(z_i) + \varepsilon_i$ and $\varepsilon_i \sim \mathcal{N}(0, \lambda^2)$, the posterior mean and variance at any test point $z \in \cZ$ are given by:
\begin{align}\nonumber
\mu_n(z) &= \mathbf{k}_n^\top (\mathbf{K}_n + \lambda^2 \bI_n)^{-1} \mathbf{y}_n, \\
\sigma_n^2(z) &= k(z, z) - \mathbf{k}_n^\top (\mathbf{K}_n + \lambda^2 \bI_n)^{-1} \mathbf{k}_n,
\end{align}
where $\mathbf{K}_n \in \mathbb{R}^{n \times n}$ is the kernel matrix with $[\mathbf{K}_n]_{ij} = k(z_i, z_j)$, $\mathbf{k}_n(z) \in \mathbb{R}^n$ has entries $k(z_i, z)$, and $\mathbf{y}_n \in \mathbb{R}^n$ is the vector of observed outputs.

\paragraph{Multi-output Gaussian processes.} In many applications, we wish to model a vector-valued function $f: \mathcal{Z} \to \mathbb{R}^d$ jointly across multiple correlated outputs. In this setting, $f$ is modeled as a multi-output GP \eqref{eq:gp_def} where $k : \mathcal{Z} \times \mathcal{Z} \to \mathbb{R}^{d \times d}$ is a matrix-valued positive semidefinite kernel that encodes both input similarity and output correlations.

Given $n$ input points $z_1, \dots, z_n$, let the observed outputs be collected into a vector $\mathbf{y}_n \in \mathbb{R}^{nd}$ by stacking all $d$ outputs at each input. The full joint prior over $\mathbf{y}_n$ is a multivariate Gaussian with zero mean and block kernel matrix $\mathbf{K}_n \in \mathbb{R}^{nd \times nd}$ defined by:
\[
\mathbf{K}_n[(i-1)d + r, (j-1)d + s] = [k(z_i, z_j)]_{rs} \quad \text{for } i,j \in [n], \; r,s \in [d].
\]

The posterior at test point $z$ is again Gaussian:
\begin{align}\nonumber
\mu_n(z) &= \mathbf{k}_n(z)^\top (\mathbf{K}_n + \lambda^2 \bI_{nd})^{-1} \mathbf{y}_n \in \mathbb{R}^d, \\
\Sigma_n(z) &= k(z, z) - \mathbf{k}_n(z)^\top (\mathbf{K}_n + \lambda^2 \bI_{nd})^{-1} \mathbf{k}_n(z) \in \mathbb{R}^{d \times d},
\end{align}
where $\mathbf{k}_n(z) \in \mathbb{R}^{nd \times d}$ is the cross-covariance between $f(z)$ and the training outputs, defined via $[\mathbf{k}_n(z)]_{(i-1)d + r, s} = [k(z_i, z)]_{rs}$. We define $\sigma_n^2(z) := \operatorname{diag}(\Sigma_n(z)) \in \mathbb{R}^d$ as the marginal predictive variances. With slight abuse of notation, we use $\sigma_n(z)$ to denote the vector of marginal standard deviations, where $\sigma_{n,i}(z) = ({\sigma^2_{n,i}(z)})^{1/2}$ for $i = 1, \dots, d$. 
The posterior mean $\mu_n(z)$ and uncertainty $\Sigma_n(z)$ allow multi-output GPs to provide joint, uncertainty-aware predictions across outputs, making them well-suited for RL settings where transition and reward models must be estimated simultaneously.

\section{Reinforcement learning with GP sampling}\label{sec:alg}

In this section, we present RL-GPS for learning episodic MDPs with joint GP modeling of the reward and transition functions, following the multi-output model in \citet{bayrooti2024efficient}.

\begin{assumption}\label{ass:GP}
Let $f = [f_{\text{R}}, f_{\text{S}}]$ denote the joint reward and transition function. We assume $f$ is distributed as a multi-output Gaussian process:
$
f \sim \gp(\mathbf{0}, k),
$
for a known matrix-valued kernel $k: \mathcal{Z} \times \mathcal{Z} \mapsto \mathbb{R}^{d \times d}$.
\end{assumption}

\begin{remark} GPs offer flexible representational capacity since their smoothness and expressiveness depend on the kernel choice. For instance, the Mat{\'e}rn family introduces a smoothness parameter $\nu$ controlling function regularity, where smaller $\nu$ values yield rougher functions that better capture non-smooth behavior. As shown by \citet{srinivas2009gaussian}, Mat{\'e}rn kernels can approximate any continuous function on compact subsets of $\mathbb{R}^d$, making GP priors highly expressive. Consequently, our setting can capture a broad class of reward and transition functions, including those with limited smoothness, and is applicable to many continuous-control environments~\citep{bayrooti2024efficient}.
\end{remark}

The RL-GPS algorithm follows a value-iteration-based form of TS, where at the start of each episode, the agent samples a realization of the reward and transition functions from the GP posterior and computes proxy value functions $Q_{h,k} : \mathcal{Z} \mapsto \mathbb{R}$ via backward induction. The agent then executes the greedy policy induced by this value function for the duration of the episode. This sampling-based approach encourages exploration by introducing structured randomness into value estimates, naturally balancing exploitation of high-reward regions with exploration of uncertain areas of the state-action space. Pseudocode is provided in Algorithm~\ref{alg:TS}.

\begin{algorithm}[H]
    \caption{RL with GP Sampling (RL-GPS)}
    \label{alg:TS}
    \begin{algorithmic}[1]
        \State \textbf{Require:} number of episodes $K$, episode length $H$, GP kernel $k$
        \State \textbf{Initialize:} reward-dynamics model $p(f)$, reward and transition buffer $\mathcal{D}$
        \For{episode $k = 1, \dots, K$}
        \State {\textcolor{darkblue}{\texttt{// Create the proxy value functions $Q_{h,k}$}}}
            \State Sample functions $[\hat{f}_{\text{R},k}, \hat{f}_{\text{S},k}]$ from GP posterior $p(f \mid \mathcal{D})$
            \State Initialize $V_{H+1, k}(\cdot) = 0$
            \For{$h = H, \dots, 1$}
                \State $Q_{h,k}(s,a) = \hat{f}_{\text{R},k}(s,a) + V_{h+1}(\hat{f}_{\text{S},k}(s,a))$
                \State $V_{h,k}(s) = \max_{a \in \cA} Q_{h,k}(s,a)$
            \EndFor
            \State {\textcolor{darkblue}{\texttt{// Follow the greedy policy with respect to $Q_{h,k}$}}}
            \State Observe initial state $s_{1,k}$
            \For{$h = 1, \dots, H$}
                \State Select action $a_{h,k} = \argmax_{a \in \cA} Q_{h,k}(s_{h,k}, a)$
                \State Observe next state $s_{h+1,k} = \fS(s_{h,k}, a_{h,k})$ and reward $r_{h,k}=\fR(s_{h,k}, a_{h,k})$
                \State Store reward and transition in buffer $(s_{h,k},a_{h,k},r_{h,k},s_{h+1,k}) \rightarrow \mathcal{D}$
            \EndFor
            \State Update GP posterior $p(f \mid \mathcal{D})$ using new transitions
        \EndFor
    \end{algorithmic}
\end{algorithm}

\section{Analysis}\label{sec:anal}

In this section, we derive a regret bound for RL-GPS (Algorithm~\ref{alg:TS}) in episodic MDPs with a multi-output GP model. We introduce intermediate results and provide the regret bound in Theorem~\ref{the:regret_bound}.

\subsection{Confidence intervals}
To analyze the regret, we require high-probability bounds on the accuracy of GP predictions. For a single-output GP $f$ with posterior mean $\mu_n$ and standard deviation $\sigma_n$, the tail decay of the Gaussian distribution implies that, with probability at least $1 - \delta$, the following holds uniformly over $z$:
\begin{equation}
    |f(z) - \mu_n(z)| \le \beta_n(\delta)\sigma_n(z)
\label{eq:gp_conf_bound}
\end{equation}
where $\beta_n(\delta) = \cO(\sqrt{\log(\frac{n}{\delta})})$~\citep[e.g., see][]{srinivas2009gaussian}. 

In the RL setting, we must also construct confidence intervals for $v(f_{\text{S}}(\cdot))$ as part of the policy's recursive design where $v:\cS\mapsto \Rr$ is a generic value function. The following theorem addresses this.

\begin{theorem}[Confidence bounds for composed GPs]
\label{the:v_conf_bound}
Assume $f:\cZ\mapsto \cS\subset \Rr^{d_{\text{S}}}$ is a multi-output GP with posterior mean $\mu_n$ and standard deviation $\sigma_n$. Let $v:\cS\mapsto \Rr$ be a twice differentiable value function where for all $s \in \cS$,
$\|\nabla v(s)\| \le u_G $ and $ \|\nabla^2 v(s)\|_{\op} \le u_H$. Define the composition $g(z)=v(f(z))$. Then, with probability $1-\delta$, for all $z\in\cZ$,
\begin{equation}
\label{eq:composed_gp_bound}
    |g(z) - v(\mu_n(z))| \le u_G\beta_n(\delta/d_{\text{S}})\|\sigma_n(z)\| + \frac{1}{2} u_H\beta_n(\delta/d_{\text{S}})^2  \|\sigma_n(z)\|^2.
\end{equation}
\end{theorem}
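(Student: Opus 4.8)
The plan is to combine a second-order Taylor expansion of $v$ about the posterior mean with a union-bounded \emph{marginal} confidence interval for the multi-output GP. First I would fix $z\in\cZ$ and write the exact expansion of $v$ around $\mu_n(z)$ with a second-order Lagrange remainder: since $v$ is twice differentiable, there is a point $\xi$ on the segment joining $\mu_n(z)$ and $f(z)$ with
\[
g(z)=v(f(z))=v(\mu_n(z))+\nabla v(\mu_n(z))^\top\bigl(f(z)-\mu_n(z)\bigr)+\tfrac{1}{2}\bigl(f(z)-\mu_n(z)\bigr)^\top\nabla^2 v(\xi)\bigl(f(z)-\mu_n(z)\bigr).
\]
Subtracting $v(\mu_n(z))$ and applying the triangle inequality splits the error into a first- and a second-order piece.

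Next I would bound these two pieces using the curvature assumptions together with Cauchy--Schwarz and the operator-norm inequality. The first-order term is controlled by $\|\nabla v(\mu_n(z))\|\,\|f(z)-\mu_n(z)\|\le u_G\|f(z)-\mu_n(z)\|$, and the second-order term by $\tfrac{1}{2}\|\nabla^2 v(\xi)\|_{\op}\,\|f(z)-\mu_n(z)\|^2\le\tfrac{1}{2}u_H\|f(z)-\mu_n(z)\|^2$, where I use that $\|\nabla v\|\le u_G$ and $\|\nabla^2 v\|_{\op}\le u_H$ hold \emph{uniformly} over $\cS$ (so they apply at the data-dependent point $\xi$ as well as at $\mu_n(z)$). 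This reduces the whole statement to a high-probability bound on the Euclidean prediction error $\|f(z)-\mu_n(z)\|$.

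To obtain that bound, I would treat each output coordinate $f_i$ as a single-output GP with marginal mean $\mu_{n,i}$ and marginal standard deviation $\sigma_{n,i}$, and apply the scalar confidence interval \eqref{eq:gp_conf_bound} at confidence level $\delta/d_{\text{S}}$, giving $|f_i(z)-\mu_{n,i}(z)|\le\beta_n(\delta/d_{\text{S}})\,\sigma_{n,i}(z)$ uniformly over $z$ with probability at least $1-\delta/d_{\text{S}}$. A union bound over the $d_{\text{S}}$ coordinates guarantees, with probability at least $1-\delta$, that all coordinate bounds hold simultaneously for all $z$; squaring and summing over $i$ then yields $\|f(z)-\mu_n(z)\|^2\le\beta_n(\delta/d_{\text{S}})^2\|\sigma_n(z)\|^2$, i.e.\ $\|f(z)-\mu_n(z)\|\le\beta_n(\delta/d_{\text{S}})\|\sigma_n(z)\|$. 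Substituting this into the two term bounds gives exactly \eqref{eq:composed_gp_bound}.

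The main obstacle is ensuring the final inequality holds \emph{uniformly} in $z$ on a single probability-$(1-\delta)$ event: the remainder point $\xi$ depends on both $z$ and the random draw of $f$, so the argument must lean on the curvature bounds being valid on all of $\cS$ rather than pointwise, and on the uniform-in-$z$ form of \eqref{eq:gp_conf_bound} so that the union bound is taken only over the $d_{\text{S}}$ output dimensions and not over the continuum of test points. I would also remark that this union bound is cheap, inflating $\beta_n$ only by the logarithmic factor from replacing $\delta$ with $\delta/d_{\text{S}}$, so the dependence on the output dimension remains mild.
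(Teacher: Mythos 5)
Your proposal is correct and follows essentially the same route as the paper's own proof: a second-order Taylor expansion of $v$ about $\mu_n(z)$ with Lagrange remainder, the uniform gradient and Hessian bounds to control the two terms, and a union bound over the $d_{\text{S}}$ output coordinates of the scalar confidence interval \eqref{eq:gp_conf_bound} to get $\|f(z)-\mu_n(z)\|\le\beta_n(\delta/d_{\text{S}})\|\sigma_n(z)\|$. As a minor aside, your version even states the second-order coefficient as $\tfrac{1}{2}u_H$, consistent with the theorem, whereas the paper's appendix writes $\tfrac{1}{2}u_H^2$ in an intermediate step, which appears to be a typo.
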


The proof uses a Taylor expansion of $v$ to bound $|g(z) - v(\mu_n(z))|$ in terms of the first- and second-order behavior of $v$, together with the standard GP confidence intervals given in~\eqref{eq:gp_conf_bound}. A detailed proof is provided in Appendix~\ref{proof:the:v_conf_bound}.

\subsection{Performance analysis of RL-GPS}

To analyze the regret of RL-GPS, we first introduce our assumption regarding the smoothness of the value functions.

\begin{assumption}[Smoothness of the value functions]
\label{ass:smoothness}
We assume that for all $h$, $V_h$ is twice differentiable where for all $s$, $\|\nabla V_h(s)\|\le u_G$ and $\|\nabla^2 V_h(s)\|_{\op}\le u_H$.
\end{assumption}

\begin{remark}
The assumptions on the gradient and Hessian norms are mild compared to those typically imposed on value functions in the literature. For example, in~\cite{jin2020provably} and~\cite{yang2020provably}, it is assumed that all proxy value functions belong to a function class defined using linear or kernel-based models, respectively. In contrast, we impose a weaker assumption only on the first and second derivatives of the value functions. 
\end{remark}
Now we present the main theorem bounding the regret of RL-GPS.
\begin{theorem}\label{the:regret_bound}
Consider the episodic MDP setting described in Section~\ref{sec:prob_form} and the RL-GPS algorithm given in Algorithm~\ref{alg:TS}. Under Assumptions~\ref{ass:GP} and~\ref{ass:smoothness}, with probability $1-\delta$, 
\begin{equation*}
\Reg(T)=\cO\left(\log(Td/\delta)\sqrt{T\Gamma(T)} \right),
\end{equation*}
where $\Gamma(T) =\sup_{z_{h,k}, h\in[H], k\in[K]}\cI_T$, $\cI_T:=\frac{1}{2}\log\det(\bI_{Td}+\frac{1}{\lambda^2}\bK_T)$.
\end{theorem}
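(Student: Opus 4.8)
The plan is to bound the per-episode regret $V^\star_1(s_{1,k})-V^{\pi_k}_1(s_{1,k})$ by inserting the agent's proxy value $V_{1,k}(s_{1,k})$ and splitting it into an \emph{optimism gap} $V^\star_1(s_{1,k})-V_{1,k}(s_{1,k})$ and an \emph{estimation error} $V_{1,k}(s_{1,k})-V^{\pi_k}_1(s_{1,k})$, then to sum over $k$ and convert the accumulated posterior uncertainty into the information-gain quantity $\Gamma(T)$.

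First I would treat the estimation error, which compares the proxy value of $\pi_k$ under the sampled model against the realized value under the true (deterministic) dynamics. Unrolling the Bellman recursion $Q_{h,k}(s,a)=\hat f_{\mathrm R,k}(s,a)+V_{h+1,k}(\hat f_{\mathrm S,k}(s,a))$ along the \emph{visited} trajectory $\{z_{h,k}\}_{h=1}^H$ with $z_{h,k}=(s_{h,k},a_{h,k})$ telescopes this difference into $\sum_{h=1}^{H}\big[(\hat f_{\mathrm R,k}-\fR)(z_{h,k})+(V_{h+1,k}(\hat f_{\mathrm S,k}(z_{h,k}))-V_{h+1,k}(\fS(z_{h,k})))\big]$. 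Each reward term is controlled by the scalar GP interval~\eqref{eq:gp_conf_bound} applied to both the true function and the posterior draw, and each transition--value term by Theorem~\ref{the:v_conf_bound} with $v=V_{h+1,k}$ (legitimate under Assumption~\ref{ass:smoothness}), producing a per-step bound of order $\beta_n\|\sigma_n(z_{h,k})\|$ up to a lower-order quadratic term. Packaging this recursion is precisely the content of Corollary~\ref{cor:prob_conf}, and the crucial point is that all uncertainties here are evaluated at points the agent actually visits.

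Next I would handle the optimism gap, following the kernelized-TS argument referenced in Step~2: using anti-concentration of the Gaussian posterior sample, I would show the sampled model is optimistic, $V_{1,k}(s_{1,k})\ge V^\star_1(s_{1,k})$, with at least a constant probability $p$; on that event the gap is non-positive, and on its complement it is re-expressed through the same visited-trajectory confidence widths at the cost of a $1/p$ factor. This is what lets me avoid controlling uncertainty along the counterfactual optimal trajectory. With both terms reduced to $\sum_{k,h}\|\sigma_n(z_{h,k})\|$, Cauchy--Schwarz gives $\sqrt{T\sum_{k,h}\|\sigma_n(z_{h,k})\|^2}$, the multi-output elliptical potential lemma (Lemma~\ref{lem:multi_gp_potential}) bounds $\sum_{k,h}\|\sigma_n(z_{h,k})\|^2$ by $\cI_T\le\Gamma(T)$ jointly across the $d$ correlated outputs, and the delayed-update lemma (Lemma~\ref{lem:delayed_multioutput_ep}) accounts for the posterior being frozen within an episode, yielding the improved $H$-dependence. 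Collecting the confidence-width factors $\beta_n=\cO(\sqrt{\log(Td/\delta)})$ from both concentration and the Gaussian sample, together with the union bound over the $K$ episodes and the $d$ outputs, produces the $\log(Td/\delta)$ prefactor and delivers $\Reg(T)=\cO(\log(Td/\delta)\sqrt{T\Gamma(T)})$.

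The hard part will be the optimism gap. Unlike Bayesian-regret analyses, a high-probability bound cannot invoke the distributional equivalence of the sampled and true MDPs, and because $V_{1,k}$ is a nonlinear $\max$-composition of the GP sample, pointwise optimism of $\hat f_k$ does not transparently propagate to optimism of the recursive value function. Carrying the anti-concentration argument through the Bellman recursion, and ensuring the low-probability failure event is absorbed into visited-point uncertainty rather than the uncontrolled optimal-trajectory uncertainty, is the delicate step on which the entire bound rests.
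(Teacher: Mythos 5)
Your high-level ingredients match the paper's (anti-concentration of the Thompson sample, the composed-GP confidence bounds of Theorem~\ref{the:v_conf_bound}, Cauchy--Schwarz plus Lemmas~\ref{lem:multi_gp_potential} and~\ref{lem:delayed_multioutput_ep}), but your decomposition is genuinely different and the step you yourself flag as "the hard part" is exactly where the argument is missing. You split the episode regret at $V_{1,k}(s_{1,k})$ into an optimism gap and an estimation error, and propose to handle the optimism gap by showing $V_{1,k}(s_{1,k})\ge V^\star_1(s_{1,k})$ with constant probability $p$ and paying $1/p$ on the complement. The trouble is that on the non-optimistic event the quantity $V^\star_1(s_{1,k})-V_{1,k}(s_{1,k})$ is controlled by uncertainties along the optimal policy's counterfactual trajectory, which the agent never visits; multiplying by $1/p$ does not convert those into visited-point widths, and you give no mechanism for doing so. Establishing constant-probability optimism of the full recursive $\max$-composition $V_{1,k}$ is itself an additional unproved claim beyond single-point Gaussian anti-concentration.

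The paper avoids this entirely by never requiring global optimism. It decomposes the regret \emph{per step} into an immediate term $Q^\star_h(s_{h,k},a^\star_{h,k})-Q^\star_h(s_{h,k},a_{h,k})$ plus a recursive term $V^\star_{h+1}(s_{h+1,k})-V^{\pi_k}_{h+1}(s_{h+1,k})$, which telescopes exactly, so only the immediate terms need bounding. Each immediate term is handled by the saturated-set argument of \citet{chowdhury2017kernelized} adapted to the recursive $Q^\star_h$: define the set of actions whose suboptimality exceeds twice their confidence width, use anti-concentration at the single point $(s_{h,k},a^\star_{h,k})$ together with Corollary~\ref{cor:prob_conf} to show the chosen action is unsaturated with probability at least $\frac12-\delta$, and then pass through the minimum-width unsaturated action $b_{h,k}$ and the inequality $\EE[\xi_{k-1}(s_{h,k},a_{h,k})]\ge(\frac12-\delta)\,\xi_{k-1}(s_{h,k},b_{h,k})$ to bound the immediate regret by a constant times $\xi_{k-1}(s_{h,k},a_{h,k})$ --- a width at the \emph{visited} point. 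This reference-action device is the concrete mechanism your proposal lacks; without it (or an equivalent), your optimism-gap term is not reduced to visited-trajectory uncertainty and the final sum over $k,h$ cannot be fed into the elliptical potential lemmas.
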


The determinant in $\Gamma(\cdot)$ represents the complexity of the function space described by the GP \citep{rasmussen2006gaussian} and serves as an upper bound on the information gain, which is discussed in detail in the next section.

The regret bound holds with high probability, where the randomness accounts for both the joint GP distribution of the environment and the randomness in the Thompson samples.
\begin{remark}\label{rem:matern}
The Mat{\'e}rn family is both theoretically significant and practically prevalent among kernel choices. By substituting the bounds on $\Gamma(\cdot)$ from~\cite{vakili2021information}, we obtain the following regret rates. For a base Mat{\'e}rn kernel with smoothness parameter $\nu>1$, our regret bound becomes:
\begin{equation*}
    \Reg(T) = \tilde{\cO}\left( T^{\frac{\nu + d}{2\nu + d}} \right),
\end{equation*}
while for the radial basis function (RBF) kernel, the regret bound simplifies to $\tilde{\cO}\left(\sqrt{T}\right)$.
\end{remark}

\begin{remark}\label{rem:BO}
When $H = 1$, learning in episodic MDPs reduces to the degenerate special case of Bayesian optimization, also known as GP bandits. In this setting, we have $T = K$ and our regret bound becomes:
\begin{equation*}
    \Reg(T) = \cO\left(\log(T/\delta)\sqrt{T\Gamma(T)}\right),
\end{equation*}
which recovers the standard regret bounds in Bayesian optimization~\citep[e.g., see][]{srinivas2009gaussian}.
\end{remark}

\subsection{Proof of Theorem~\ref{the:regret_bound}} \label{sec:proof_regret_bound}

Our analysis bounds the total regret by comparing the per-episode value of RL-GPS to the optimal value function. We decompose this difference into immediate and recursive components, bound each using confidence intervals derived from the GP model, and then accumulate the bounds via the information gain of the kernel. The key novelty lies in extending both the confidence analysis and the elliptical potential lemma to multi-output GPs. A detailed proof is provided in Appendix~\ref{proof:the:regret_bound}, structured around the following four main steps:

\textbf{Step 1: Regret decomposition.} We decompose the per-step regret into two components: an immediate regret term related to TS, and a recursive term capturing uncertainty in value propagation through the transition model (proven in~\eqref{eq:decompproof} in Appendix~\ref{proof:the:regret_bound}).
\begin{align}
    V_h^{\star}(s_{h,k}) - V_h^{\pi_k}(s_{h,k}) &=\underbrace{Q^{\star}_h(s_{h,k}, a^{\star}_{h,k}) - Q^{\star}_h(s_{h,k}, a_{h,k})}_{\text{(Immediate regret)}} + \underbrace{\left(V_{h+1}^{\star}(s_{h+1,k}) - V_{h+1}^{\pi_k}(s_{h+1,k}))  \right)}_{\text{(Recursive part of regret)}}.
\end{align}
\textbf{Step 2: Bounding the immediate regret.}  
To bound the immediate regret, we build on techniques from the analysis of TS in GP bandits~\citep{chowdhury2017kernelized}, but face two key challenges unique to the RL setting:  
(i) the target function $Q_h^{\star}(s, a) = f_{\text{R}}(s, a) + V_{h+1}^{\star}(f_{\text{S}}(s, a))$ is recursive and compositional and thus cannot be directly modeled as a GP;  
(ii) the TS algorithm samples from the posterior of a proxy $Q_{h,k}$, not the true posterior over $Q_h^{\star}$.

To address both issues, we construct high-probability upper and lower confidence bounds on the proxy and target value functions, which to our knowledge have not appeared in prior analyses. These confidence intervals allow us to relate the sampled proxy values to the true value of the selected action and quantify the regret incurred.

\begin{definition}[Upper and lower confidence bounds for value functions]\label{def:up_low_value}
We define the upper confidence bounds recursively as:
\begin{align*}
    Q_{h,k}^{u}(s,a) &= \mu_{\text{R},k}(s,a) + V_{h+1,k}^{u}(\mu_{\text{S},k}(s,a)) + \xi_k(s, a), ~~~
    V_{h,k}^{u}(s)   = \max_{a \in \cA} Q_{h,k}^{u}(s,a),
\end{align*}
with $V_{H+1,k}^{u}(s) = 0$.  
Similarly, the lower confidence bounds are defined as:
\begin{align*}
    Q_{h,k}^{l}(s,a) &= \mu_{R,k}(s,a) + V_{h+1,k}^{l}(\mu_{S,k}(s,a)) - \xi_k(s, a), ~~~
    V_{h,k}^{l}(s)   = \max_{a \in \cA} Q_{h,k}^{l}(s,a),
\end{align*}
with $V_{H+1,k}^{l}(s) = 0$.
The confidence width $\xi_k(s,a)$ is given by:
\begin{equation}
    \xi_k(s,a) = \beta_k(\delta/Td)\sigma_{\text{R},k}(s,a) 
    + u_G\beta_k(\delta/Td)\|\sigma_{\text{S},k}(s,a)\| 
    + \frac{1}{2} u_H \beta_k(\delta/Td)^2 \|\sigma_{\text{S},k}(s,a)\|^2, \nonumber
\end{equation}
based on Theorem~\ref{the:v_conf_bound}. Here, $\mu_{\text{R},k}$, $\sigma_{\text{R},k}$ and $\mu_{\text{S},k}$, $\sigma_{\text{S},k}$ are the posterior mean and standard deviation of $\fR$ and $\fS$, respectively.

\end{definition}

\begin{corollary}\label{cor:prob_conf}
With probability at least $1 - \delta$, the optimal and proxy value functions lie within the confidence bounds: 
\[
Q_{h,k}^{l}(s,a) \le Q_{h,k}(s,a), ~ Q_h^{\star}(s,a) \le Q_{h,k}^{u}(s,a), \quad \forall (h, k, s, a).
\]
\end{corollary}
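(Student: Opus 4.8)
The plan is to prove both inequalities simultaneously by backward induction on $h$, running from $h=H+1$ down to $h=1$, after first fixing a single high-probability ``good event'' on which all the GP concentration statements hold. This good event collects four families of bounds: the scalar bound~\eqref{eq:gp_conf_bound} applied to the true reward $\fR$ and to its Thompson sample $\hat f_{\text{R},k}$, and the coordinatewise analogue applied to the true transition $\fS$ and to its Thompson sample $\hat f_{\text{S},k}$, each holding uniformly over $z$ and over all episodes $k$ and output coordinates. Because the posterior residual $\fS-\mu_{\text{S},k}$ and the sampled residual $\hat f_{\text{S},k}-\mu_{\text{S},k}$ are both centered Gaussian processes with the same posterior covariance, the same constant $\beta_k$ governs truth and sample alike; choosing the confidence level $\delta/Td$ inside $\beta_k$ and taking a union bound over episodes and coordinates keeps the total failure probability at most $\delta$.

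On this event I carry out the induction. The base case $h=H+1$ is immediate since $V^\star_{H+1}$, $V_{H+1,k}$, $V^u_{H+1,k}$ and $V^l_{H+1,k}$ all vanish. For the inductive step I assume $V^l_{h+1,k}(s)\le V_{h+1,k}(s)$ and $V^\star_{h+1}(s)\le V^u_{h+1,k}(s)$ for all $s$, and establish the two $Q$-inequalities at level $h$; the corresponding $V$-inequalities then follow by taking $\max_{a}$ on both sides, which preserves each inequality and supplies the hypothesis for the next step.

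For the upper bound I expand $Q^\star_h(s,a)=\fR(s,a)+V^\star_{h+1}(\fS(s,a))$ via the Bellman equation, bound $\fR(s,a)\le\mu_{\text{R},k}(s,a)+\beta_k\sigma_{\text{R},k}(s,a)$ using the reward bound, and bound $V^\star_{h+1}(\fS(s,a))\le V^\star_{h+1}(\mu_{\text{S},k}(s,a))+u_G\beta_k\|\sigma_{\text{S},k}(s,a)\|+\tfrac{1}{2}u_H\beta_k^2\|\sigma_{\text{S},k}(s,a)\|^2$ by applying Theorem~\ref{the:v_conf_bound} with $v=V^\star_{h+1}$, which obeys Assumption~\ref{ass:smoothness}. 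The inductive hypothesis replaces $V^\star_{h+1}(\mu_{\text{S},k}(s,a))$ by $V^u_{h+1,k}(\mu_{\text{S},k}(s,a))$, and the three residual terms assemble exactly into $\xi_k(s,a)$, yielding $Q^\star_h\le Q^u_{h,k}$. The lower bound is symmetric but anchored at the Thompson sample rather than the truth: starting from $Q_{h,k}(s,a)=\hat f_{\text{R},k}(s,a)+V_{h+1,k}(\hat f_{\text{S},k}(s,a))$, I use $\hat f_{\text{R},k}\ge\mu_{\text{R},k}-\beta_k\sigma_{\text{R},k}$ and the composed-GP Taylor bound applied to $v=V_{h+1,k}$ and the sampled transition to obtain $V_{h+1,k}(\hat f_{\text{S},k}(s,a))\ge V_{h+1,k}(\mu_{\text{S},k}(s,a))-u_G\beta_k\|\sigma_{\text{S},k}(s,a)\|-\tfrac{1}{2}u_H\beta_k^2\|\sigma_{\text{S},k}(s,a)\|^2$, then apply $V_{h+1,k}(\mu_{\text{S},k})\ge V^l_{h+1,k}(\mu_{\text{S},k})$; the residuals again collect into $\xi_k(s,a)$, giving $Q_{h,k}\ge Q^l_{h,k}$.

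The step I expect to be the main obstacle is the lower bound, precisely because the proxy value function is evaluated at the random point $\hat f_{\text{S},k}(s,a)$ while the confidence interval is centered at $\mu_{\text{S},k}(s,a)$: bridging the two requires a version of the composed-GP bound in which the inner function is a posterior sample rather than the true GP. Justifying that the same $\beta_k$ and the same second-order Taylor control transfer to the sample path, together with the care needed for the concentration to hold uniformly over the continuous domain for a random function and not merely a fixed one, is the delicate point; the remaining algebra is routine once the good event and the inductive hypothesis are in place.
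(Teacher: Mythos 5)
Your proposal is correct and follows essentially the same route as the paper's proof: a backward induction from $h=H+1$ anchored on a union-bounded good event, with the upper bound obtained by applying Theorem~\ref{the:v_conf_bound} to $V^\star_{h+1}$ around the true $(\fR,\fS)$ and the lower bound by applying it to $V_{h+1,k}$ around the Thompson sample, then passing to $V$ via $\max_a$. The one point you flag as delicate --- that the composed-GP bound must also hold when the inner function is the posterior sample rather than the truth --- is handled implicitly in the paper by the same observation you make (sample and truth share the posterior covariance, so the same $\beta_k$ applies), and your explicit treatment of it is if anything slightly more careful.
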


\textbf{Step 3: Accumulating regret across an episode.} We unroll the recursive regret decomposition and accumulate the bounds over all steps $h = 1, \ldots, H$ within an episode, leading to a per-episode regret bound expressed in terms of confidence widths:
\begin{align*}
    \Reg(T) &\le c \sum_{k=1}^K \sum_{h=1}^H \xi_k(s_{h,k}, a_{h,k}).
\end{align*}
where $c$ is an absolute constant.

\textbf{Step 4: Bounding cumulative regret.} 
Finally, we sum the per-episode regret bounds over $K$ episodes, obtaining a cumulative regret expression that involves sums of posterior standard deviations in GPs. We bound these terms using a new elliptical potential lemma (Lemma \ref{lem:multi_gp_potential}) for multi-output GPs, which is introduced in the next section. Also, an additional challenge arises due to the structure of the problem: it involves a double sum over episodes $k$ and steps $h$, but within each episode, the observations across $h$ are not sequentially incorporated into the GP posterior. This batched structure leads to an additional dependence on the episode length $H$, which we address using tools developed for GP analysis under batch observations and delayed feedback~\citep{calandriello2020near, calandriello2022scaling}.

\subsection{Elliptical potential lemma for multi-output GPs}

In analyzing cumulative regret, a central quantity is the sum of sequentially conditioned posterior variances from a GP. A classical result by~\citet{srinivas2009gaussian} shows that for scalar-output GPs, the sum of variances along a sequence of inputs $z_1, \ldots, z_T$ is bounded by a log-determinant term:
\begin{equation*}
\sum_{t=1}^T \sigma^2_{t-1}(z_t) \le C \log \det\left(\bI_{Td} + \frac{1}{\lambda^2} \bK_T\right),
\end{equation*}
where $\lambda$ is the GP noise parameter, $\bK_T$ is the kernel matrix over the inputs, and $C$ is a constant. This result is often referred to as the \emph{elliptical potential lemma}, particularly in the special case of linear kernels~\citep{carpentier2020elliptical}.

In our setting, the transition function is modeled as a multi-output GP, which requires a generalization of the classical result to vector-valued functions. Applying scalar GP bounds independently across output dimensions would result in regret bounds that scale linearly with the state dimension $d_{\text{S}}$. To address this, we derive the following elliptical potential lemma for multi-output GPs that exploits inter-dimensional correlations captured by the kernel.

\begin{lemma}[Elliptical potential lemma for multi-output GPs]
\label{lem:multi_gp_potential}
Let $f: \cZ \rightarrow \Rr^d$ be a $d$-dimensional function modeled by a multi-output GP with matrix-valued kernel $k$. Let $z_1, \ldots, z_T \in \cZ$ be a sequence of input points, and denote by $\sigma_{t-1}(z_t) \in \Rr^d$ the vector of posterior standard deviations at $z_t$ conditioned on data up to round $t-1$. Then,
\[
\sum_{t=1}^T \| \sigma_{t-1}(z_t) \|^2 \le C  \mathcal{I}_T,
\]
where $C = \frac{2}{\log(1 + \lambda^{-2})}$ is a constant and $\mathcal{I}_T$ denotes the mutual information between the observations $y_{1:T} = \{f(z_t) + \varepsilon_t\}_{t=1}^T$ and the latent function $f$, given by
\[
\mathcal{I}_T = \frac{1}{2} \log \det \left(\bI_{Td} + \lambda^{-2} \bK_T \right).
\]
\end{lemma}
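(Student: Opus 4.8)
The plan is to mirror the scalar argument of \citet{srinivas2009gaussian} but lift every step from the scalar variance to the full posterior covariance matrix $\Sigma_{t-1}(z_t)$, noting that $\|\sigma_{t-1}(z_t)\|^2 = \sum_{i=1}^d \sigma_{t-1,i}^2(z_t) = \mathrm{tr}\,\Sigma_{t-1}(z_t)$. First I would connect each per-round trace to a log-determinant through mutual information. Writing $y_t = f(z_t) + \varepsilon_t$ with $\varepsilon_t \sim \cN(0,\lambda^2\bI_d)$, the conditional predictive covariance of $y_t$ given $y_{1:t-1}$ is $\Sigma_{t-1}(z_t) + \lambda^2\bI_d$, so the conditional mutual information is
\[
I(y_t; f \mid y_{1:t-1}) = \tfrac{1}{2}\log\det\bigl(\bI_d + \lambda^{-2}\Sigma_{t-1}(z_t)\bigr).
\]
By the chain rule of mutual information, $\sum_{t=1}^T I(y_t; f \mid y_{1:t-1}) = I(y_{1:T}; f) = \cI_T$, which is exactly the stated $\tfrac{1}{2}\log\det(\bI_{Td} + \lambda^{-2}\bK_T)$. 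This telescoping is the multi-output analog of the scalar determinant identity and follows cleanly from information theory, so it requires no new computation.

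The core new ingredient is a matrix version of the scalar bound $s \le \frac{1}{\log(1+\lambda^{-2})}\log(1+\lambda^{-2}s)$. I would diagonalize the symmetric positive semidefinite matrix $\Sigma_{t-1}(z_t)$ with eigenvalues $\mu_1,\ldots,\mu_d$, so that $\mathrm{tr}\,\Sigma_{t-1}(z_t) = \sum_i \mu_i$ and $\log\det(\bI_d + \lambda^{-2}\Sigma_{t-1}(z_t)) = \sum_i \log(1+\lambda^{-2}\mu_i)$. The scalar inequality, which I would prove by noting that $s \mapsto \log(1+\lambda^{-2}s)$ is concave and hence lies above its chord on $[0,1]$ (giving $\log(1+\lambda^{-2}s) \ge s\log(1+\lambda^{-2})$), then applies to each $\mu_i$ provided $\mu_i \in [0,1]$. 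Summing over $i$ yields the per-round bound
\[
\mathrm{tr}\,\Sigma_{t-1}(z_t) \le \frac{1}{\log(1+\lambda^{-2})}\log\det\bigl(\bI_d + \lambda^{-2}\Sigma_{t-1}(z_t)\bigr) = \frac{2}{\log(1+\lambda^{-2})}\,I(y_t; f \mid y_{1:t-1}),
\]
and summing over $t$ together with the chain rule gives $\sum_t \|\sigma_{t-1}(z_t)\|^2 \le C\,\cI_T$ with $C = \frac{2}{\log(1+\lambda^{-2})}$.

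The step I expect to need the most care is justifying that the eigenvalues $\mu_i$ lie in $[0,1]$, since the chord bound fails for $\mu_i > 1$. Nonnegativity is immediate because $\Sigma_{t-1}(z_t)$ is a covariance matrix. For the upper bound I would use that conditioning decreases uncertainty in the Loewner order: the posterior covariance satisfies $\Sigma_{t-1}(z_t) = k(z_t,z_t) - \mathbf{k}_{t-1}(z_t)^\top (\bK_{t-1}+\lambda^2\bI)^{-1}\mathbf{k}_{t-1}(z_t) \preceq k(z_t,z_t)$, since the subtracted term is positive semidefinite. Under the standard kernel normalization $k(z,z) \preceq \bI_d$ (the multi-output analog of $k(z,z) \le 1$ used in the scalar setting), this gives $\Sigma_{t-1}(z_t) \preceq \bI_d$ and hence $\mu_i \le 1$, completing the argument. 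I would state this normalization explicitly, as it is the only additional structural requirement beyond Assumption~\ref{ass:GP}, and it is precisely what makes the eigenvalue-wise reduction to the scalar inequality valid.
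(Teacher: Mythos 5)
Your proposal is correct and takes a genuinely different route from the paper's at the one step that matters. The paper applies the scalar inequality $\sigma^2 \le \frac{1}{\log(1+\lambda^{-2})}\log(1+\lambda^{-2}\sigma^2)$ to each \emph{marginal} variance $\sigma^2_{t-1,j}(z_t)$ and then asserts that $\sum_{t,j}\log\bigl(1+\lambda^{-2}\sigma^2_{t-1,j}(z_t)\bigr) \le 2\cI_T$ because $y_{1:T}$ is jointly Gaussian. You instead diagonalize the full per-round posterior covariance $\Sigma_{t-1}(z_t)$, apply the scalar chord bound to its \emph{eigenvalues} $\mu_i$, and use the exact identities $\mathrm{tr}\,\Sigma_{t-1}(z_t)=\sum_i\mu_i$ and $\log\det\bigl(\bI_d+\lambda^{-2}\Sigma_{t-1}(z_t)\bigr)=\sum_i\log(1+\lambda^{-2}\mu_i)$ together with the chain rule $\sum_t I(y_t;f\mid y_{1:t-1})=\cI_T$. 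Your version is the more robust of the two: the paper's marginal-variance step is exactly where care is needed, since Hadamard's inequality gives $\log\det(\bI_d+\lambda^{-2}\Sigma)\le\sum_j\log(1+\lambda^{-2}\Sigma_{jj})$, i.e.\ the sum of marginal log-terms \emph{upper}-bounds the per-round information rather than being bounded by it, so the paper's inequality does not follow from joint Gaussianity alone (for two perfectly correlated unit-variance outputs at a single input with $\lambda=1$, the sum of marginal log-terms is $2\log 2$ while $2\cI_1=\log 3$). Your spectral route sidesteps this entirely, at the price of the explicitly stated normalization $k(z,z)\preceq\bI_d$ in the Loewner order (so that every eigenvalue of $\Sigma_{t-1}(z_t)$ lies in $[0,1]$); this is stronger than the entrywise normalization $k_{jj}(z,z)\le 1$ that the paper's scalar inequality implicitly requires, but the correlated example above shows some such strengthening is needed for the stated bound to hold. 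In short: same skeleton (scalar chord bound plus information-theoretic telescoping), but your choice to work with the eigenvalues of $\Sigma_{t-1}(z_t)$ rather than its diagonal makes every step either an identity or a genuine one-line inequality, and it supplies the hypothesis under which the lemma is actually true.
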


A detailed proof is provided in Appendix~\ref{proof:lem:multi_gp_potential}. It extends the result of~\cite{srinivas2009gaussian} to the multi-output setting relevant to our analysis.

\section{Experiments}\label{sec:exp}

To empirically validate the regret scaling in Theorem~\ref{the:regret_bound}, we present synthetic experiments based on the episodic MDP setup from Section~\ref{sec:prob_form} to assess the performance of RL-GPS in controlled environments that reflect the assumptions in our theoretical analysis.

\textbf{Kernel complexity.} This experiment studies how kernel complexity impacts regret in synthetic MDPs generated from GP-sampled environments. The state and action spaces are continuous, $\mathcal{S} = [0,1]^2$ and $\mathcal{A} = [0,1]$, but each dimension is discretized into 25 equally sized bins to enable tractable value function approximation. We compare three common kernel functions: the Radial Basis Function (RBF) kernel and Mat{\'e}rn kernels with smoothness parameters $\nu=2.5$ and $\nu=1.5$. For the specific multi-output GP model, we use the popular \emph{linear model of coregionalization} (LMC)~\citep{grzebyk1994multivariate, wackernagel2003multivariate}, which predicts the final vector-valued output as a linear combination of independent latent GPs (see Appendix \ref{app:lmc} for more details). 
\begin{wrapfigure}{r}{0.5\textwidth}
    \centering
    \vspace{-10pt}
    \includegraphics[width=\linewidth]{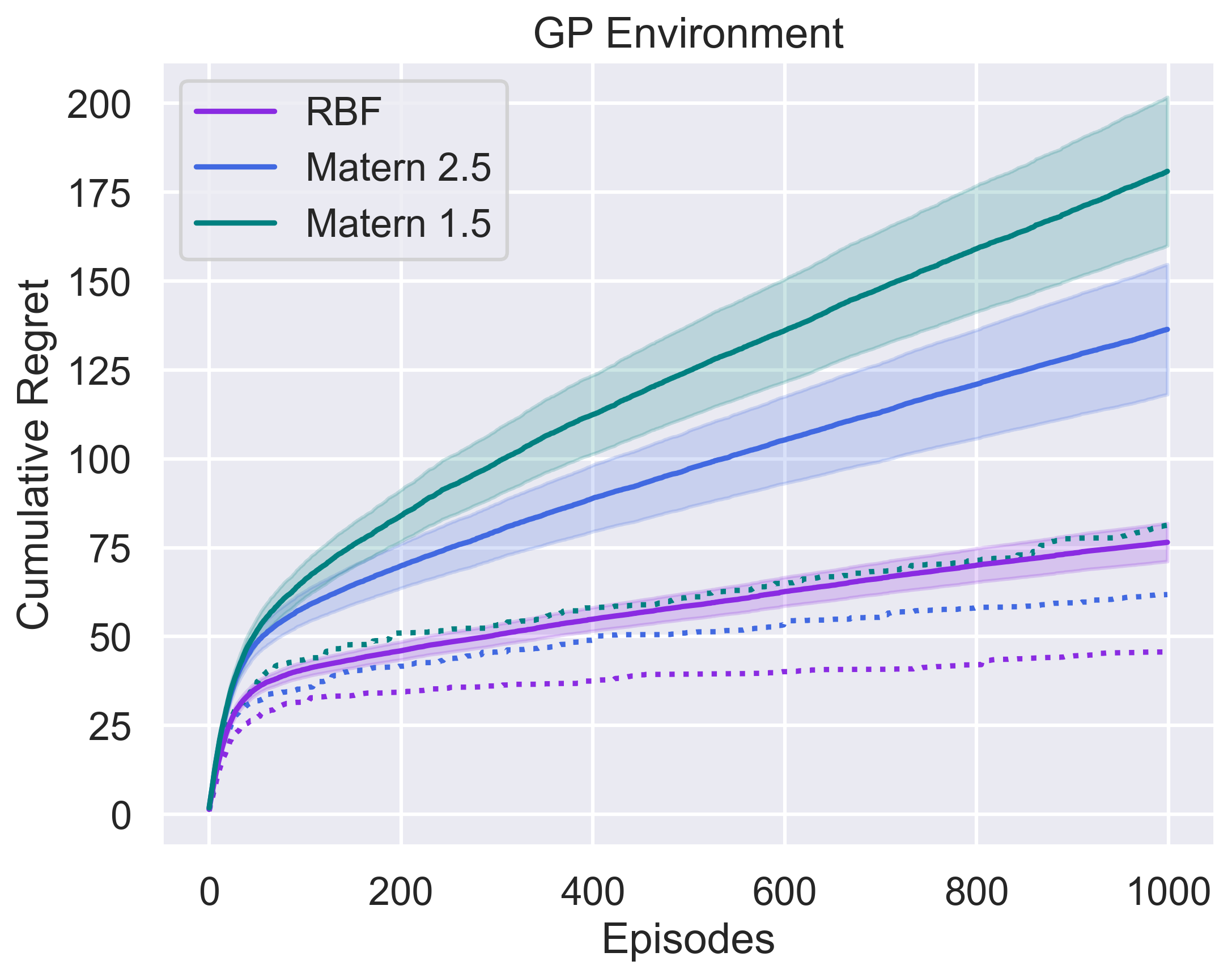}
    \caption{Cumulative regret over different kernels on GP-sampled environments over 200 trials. The shaded region around each curve represents $\pm$1 standard error of the mean across trials. Dotted lines represent median regrets.}
    \label{fig:fixed_gp_regret_plot}
    \vspace{-10pt}
\end{wrapfigure}
For each kernel, a sparse LMC multi-output GP with zero mean and fixed linear correlations is used to sample the ground-truth reward and transition functions, $f_{\text{R}}$ and $f_{\text{S}}$, respectively (with rewards normalized to $r \in [0,1]$ per step), thereby defining the MDP. The optimal value function $V^{\star}$ is computed using finite-horizon value iteration with $H=20$. Algorithm~\ref{alg:TS} is run for $K = 1000$ episodes and cumulative regret relative to the optimal value function is quantified.
The results are averaged over 200 randomly sampled environments and shown in Figure~\ref{fig:fixed_gp_regret_plot}. Across all kernels, cumulative regret grows sublinearly, which is consistent with our theoretical analysis. Performance varies with the complexity of the kernel: the RBF kernel yields the lowest regret, followed by Mat{\'e}rn $\nu=2.5$ and then Mat{\'e}rn $\nu=1.5$, as predicted by Remark~\ref{rem:matern}. This reflects that rougher kernels correspond to more complex function classes and require more data to accurately estimate value functions. These results empirically confirm the theoretical link between kernel smoothness and learning efficiency that our analysis formally elucidates through the regret bound's dependence on information gain $\Gamma(\cdot)$.

\begin{figure}[h]
    \centering
    \begin{subfigure}[b]{0.33\textwidth}
        \centering
        \includegraphics[width=\textwidth]{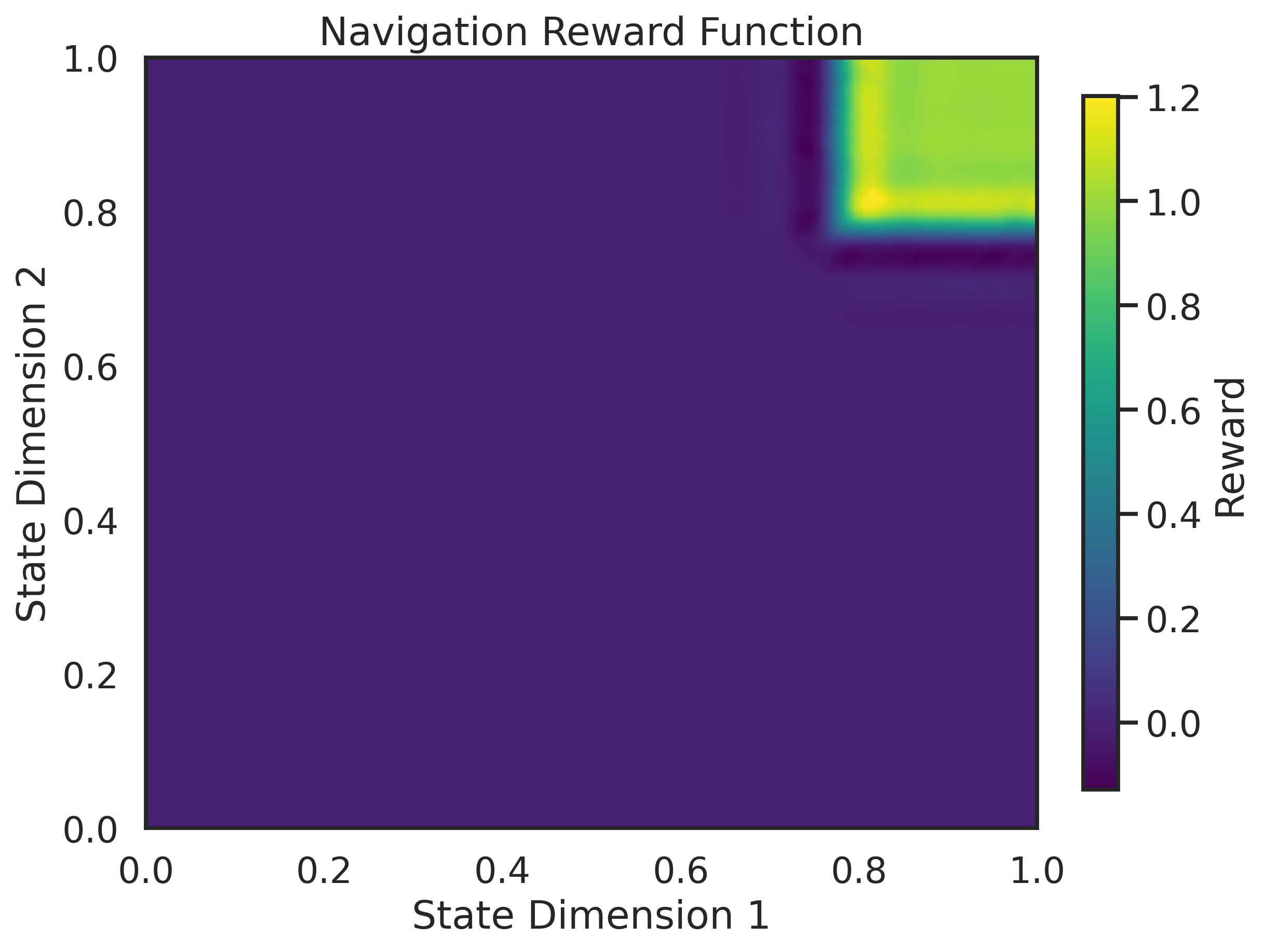}
    \end{subfigure}
    \hspace{-0.01\textwidth}
    \begin{subfigure}[b]{0.33\textwidth}
        \centering
        \includegraphics[width=\textwidth]{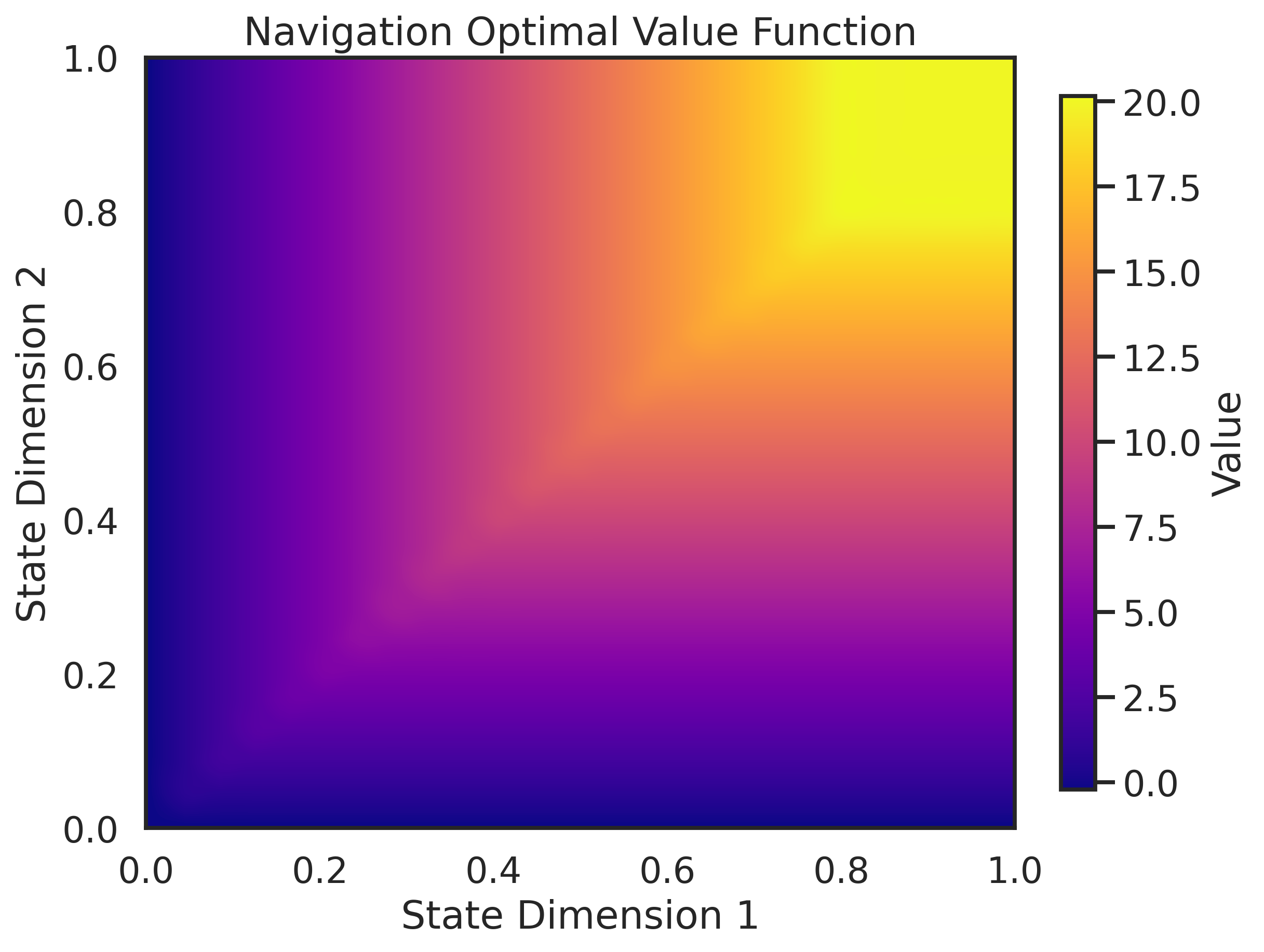}
    \end{subfigure}
    \hspace{-0.01\textwidth}
    \begin{subfigure}[b]{0.33\textwidth}
        \centering
        \includegraphics[width=\textwidth]{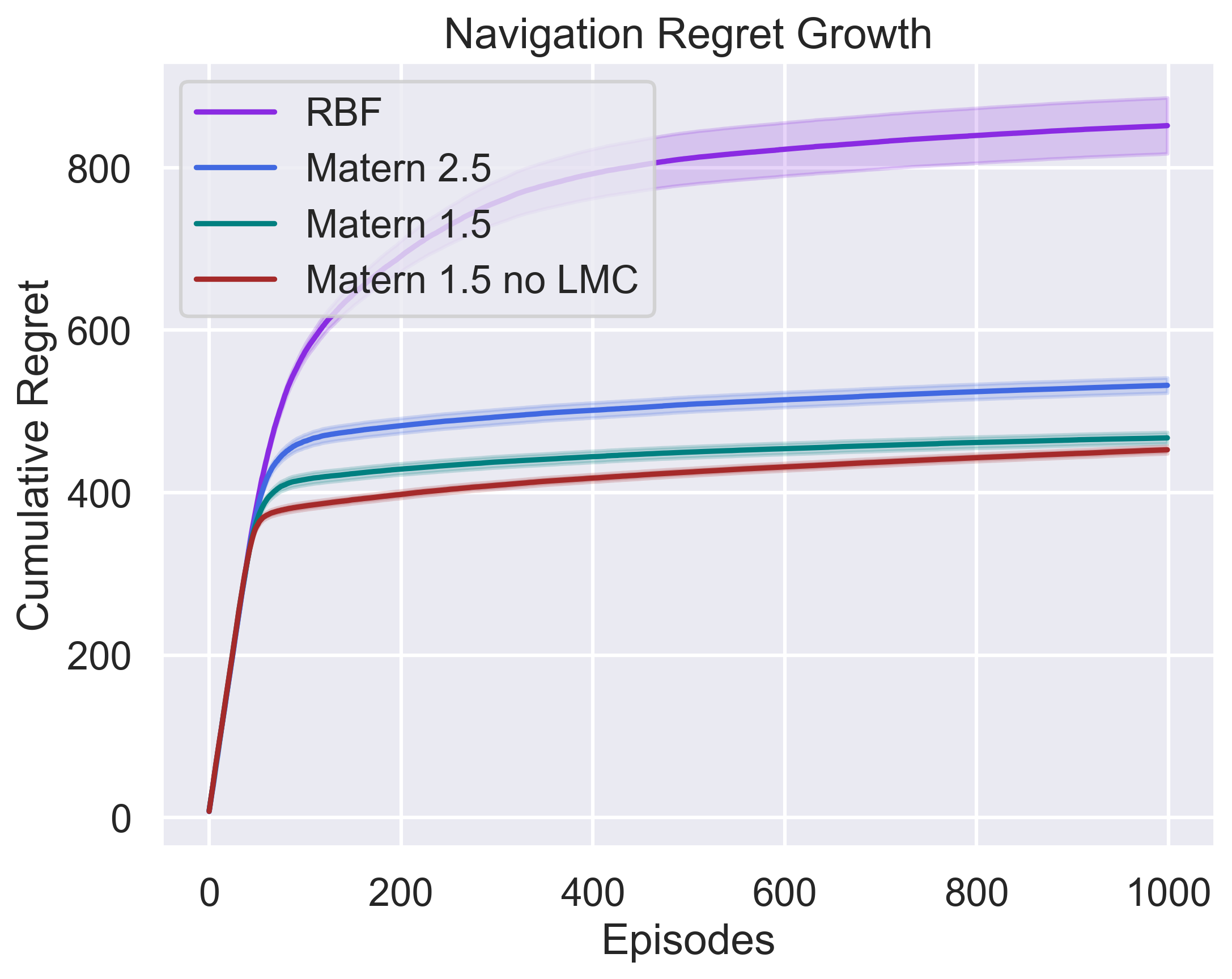}
    \end{subfigure}
    \centering
    \begin{subfigure}[b]{0.33\textwidth}
        \centering
        \includegraphics[width=\textwidth]{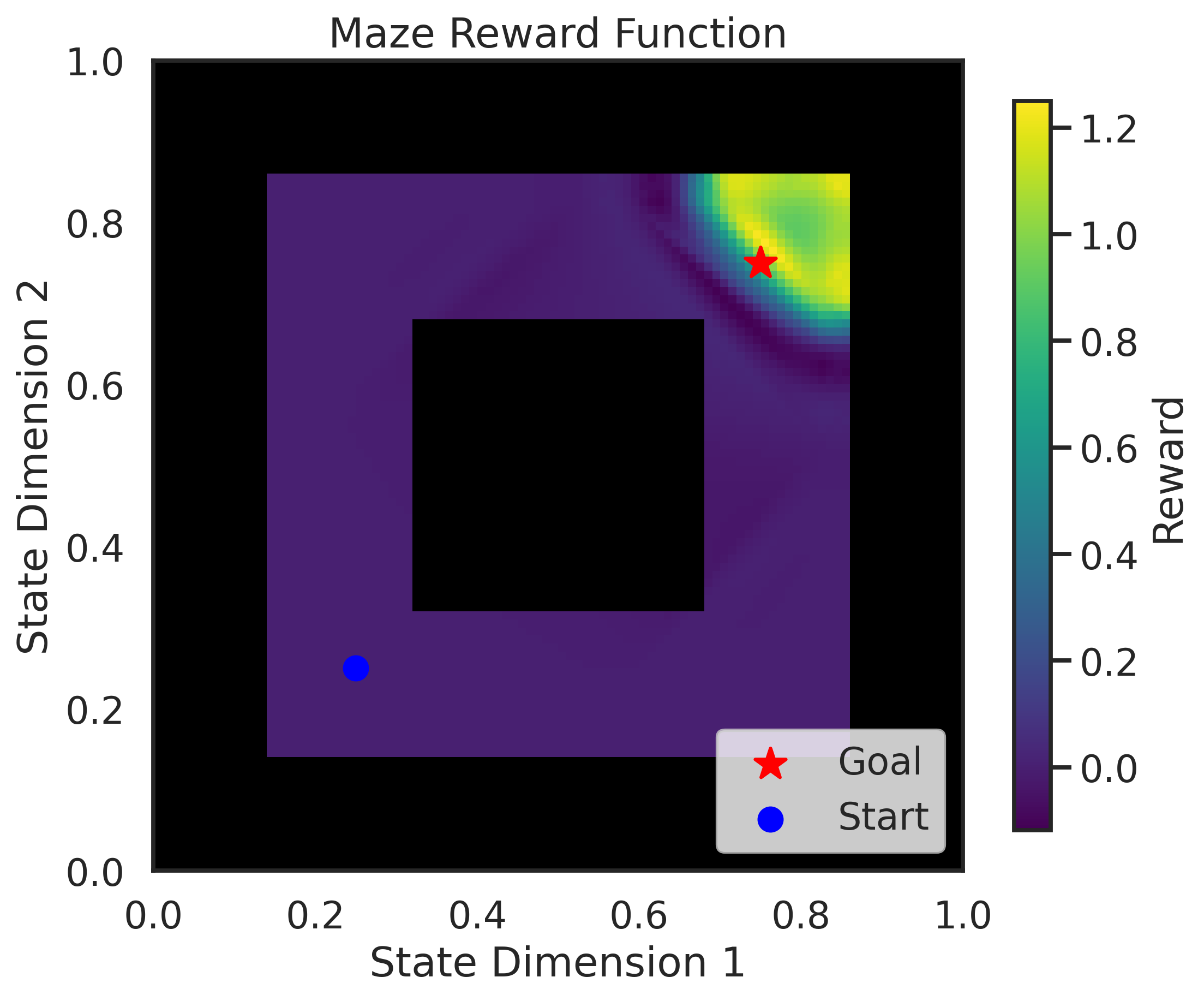}
    \end{subfigure}
    \hspace{-0.01\textwidth}
    \begin{subfigure}[b]{0.33\textwidth}
        \centering
        \includegraphics[width=\textwidth]{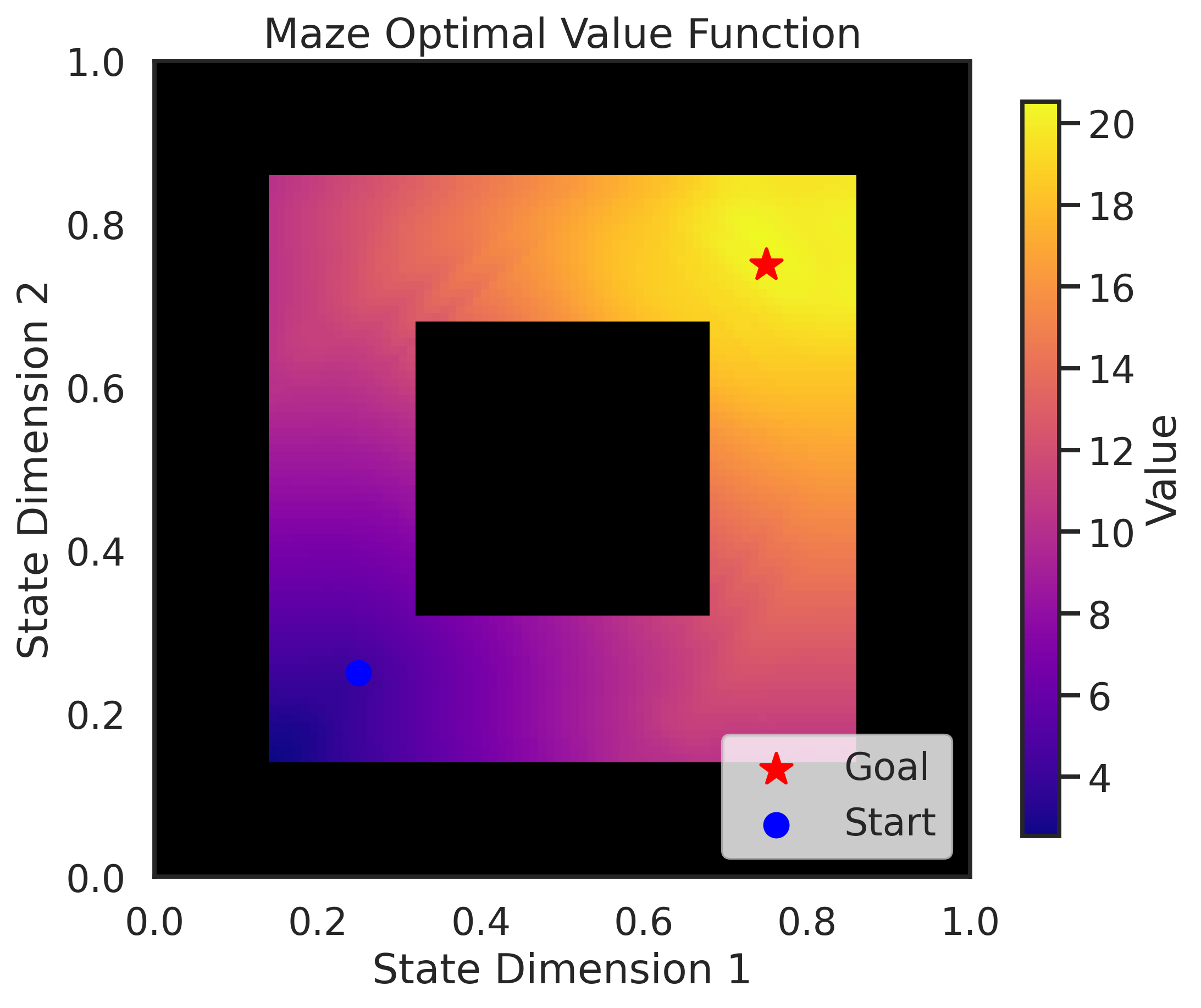}
    \end{subfigure}
    \hspace{-0.01\textwidth}
    \begin{subfigure}[b]{0.33\textwidth}
        \centering
        \includegraphics[width=\textwidth]{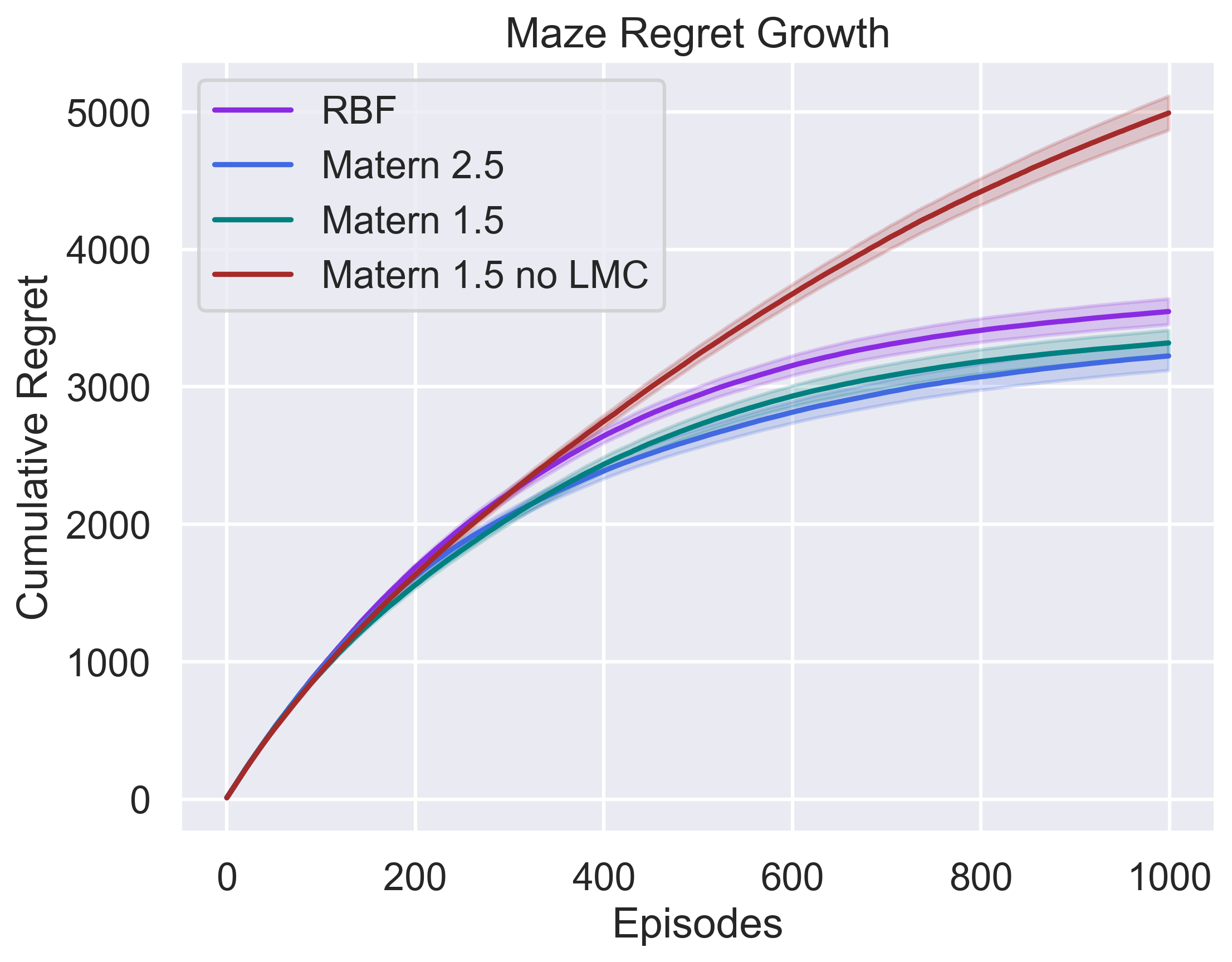}
    \end{subfigure}
    \caption{The reward function (left column), optimal value function (middle column), and cumulative regret of TS with a multi-output GP with confidence regions representing the standard error over 200 random trials (right column) are given. The first row corresponds to the sparse navigation task and the second row corresponds to the sparse maze problem.}
    \label{fig:other_problems}
\end{figure}

\paragraph{Multi-output kernel structure.} These experiments study how the multi-output kernel structure affects regret in sparse navigation tasks. The state space $\mathcal{S} = [0,1]^2$ is discretized into 25 equally spaced bins per dimension and the action space $\mathcal{A}$ consists of 9 discrete actions that move the agent in the cardinal or diagonal directions or allow it to remain stationary. The agent receives a reward of $+1$ when within $0.1$ of the destination and a penalty of $-0.01$ otherwise. We study two settings: free movement within the grid and constrained navigation through a maze. As before, the optimal value function $V^{\star}$ is computed using finite-horizon value iteration. Algorithm~\ref{alg:TS} runs for $K=1000$ episodes with a sparse LMC multi-output GP posterior. The results over 200 trials are shown in Figure~\ref{fig:other_problems} and demonstrate sublinear regret growth, consistent with our theory. Notably, in these sparse, less smooth environments, the RBF kernel accumulates regret more rapidly than the Mat\'ern kernels. This is due to the RBF kernel’s strong smoothness assumptions, which lead to model misspecification and slower adaptation when the ground truth is rougher~\citep{rasmussen2006gaussian}.
The results also highlight the value of modeling output correlations using the LMC. Specifically, the Mat{\'e}rn $1.5$ kernel without LMC incurs substantially higher regret in the maze environment compared to its LMC-enabled counterpart. This indicates that explicitly capturing output dependencies can improve sample efficiency and reduce regret, especially in structured or high-dimensional tasks. Additional information about our experiments and results are given in Appendix \ref{app:experiments} and our open-sourced implementation at: \url{https://github.com/jbayrooti/TS_regret_study}.

\section{Conclusion}\label{sec:conc}

We presented a regret analysis of Thompson sampling for reinforcement learning in finite-horizon Markov Decision Processes with multi-output Gaussian Processes jointly modeling rewards and transitions. Our analysis established a sublinear regret bound of $\tilde{\mathcal{O}}(\sqrt{KH\Gamma(KH)})$, demonstrating how the complexity of the GP kernel governs learning efficiency. To derive this result, we introduced new tools for bounding uncertainty in recursive value functions, including confidence intervals for compositional functions of GPs and a multi-output elliptical potential lemma that captures correlations across components. These results extend classical GP analysis to vector-valued and recursive settings and may be of independent interest. Our experiments validated our theoretical predictions, showing sublinear regret across various tasks and kernels. Overall, this work illustrated how structural assumptions and posterior uncertainty influence the dynamics of exploration in reinforcement learning.

\paragraph{Limitations.} Our analysis relies on assumptions that may not hold in general. Specifically, we assume that the environment dynamics and rewards at each decision step are jointly Gaussian (Assumption~\ref{ass:GP}) and that value functions are twice differentiable with bounded gradients and Hessians (Assumption~\ref{ass:smoothness}). These conditions enable tractable analysis but exclude settings with discontinuous or non-smooth dynamics. Nonetheless, our smoothness assumptions are milder than those commonly imposed in prior work on regret analyses. Finally, this work focuses on finite-horizon episodic MDPs; extending the analysis to infinite-horizon settings remains a promising direction for future research.

\subsubsection*{Acknowledgments}

J. Bayrooti is supported by a DeepMind scholarship. A. Prorok is supported in part by European Research Council (ERC) Project 949940 (gAIa).

\bibliographystyle{abbrvnat}
\bibliography{references}

\raggedbottom

\pagebreak

\appendix

\section{Proof of Theorem~\ref{the:v_conf_bound}}
\label{proof:the:v_conf_bound}
In this section, we provide a detailed proof of Theorem~\ref{the:v_conf_bound}. Recall from~\eqref{eq:gp_conf_bound} that for a single-output GP $f$ with posterior mean and standard deviation $\mu_{n}$ and $\sigma_{n}$, we have, with probability $1-\delta$, uniformly in $z$,
\begin{equation}
    |f(z) - \mu_n(z)| \le \beta_n(\delta)\sigma_n(z)
\end{equation}
where $\beta_n(\delta) = \cO(\sqrt{\log(\frac{n}{\delta})})$.

To extend this to a composition $v(f(z))$, where $f(z) \in \mathbb{R}^d$ is drawn from a multi-output GP and $v: \mathbb{R}^d \to \mathbb{R}$ is a smooth function, we apply Taylor's theorem. For any $z \in \mathcal{Z}$, there exists a point $\zeta$ on the line segment connecting $f(z)$ and $\mu_n(z)$ such that:
\[
v(f(z)) = v(\mu_n(z)) + \nabla v(\mu_n(z))^\top (f(z) - \mu_n(z)) + \frac{1}{2} (f(z) - \mu_n(z))^\top \nabla^2 v(\zeta)\, (f(z) - \mu_n(z)).
\]
Taking absolute values and using the bounds on gradient and Hessian 
\begin{align*}
|v(f(z)) - v(\mu_n(z))| &\le \|\nabla^\top v(\mu_n(z))\| \|f(z) - \mu_n(z)\| + \frac{1}{2} \|\nabla^2 v(\zeta)\|_{\op} \|f(z) - \mu_n(z)\|^2 \\
&\le u_G\|f(z) - \mu_n(z)\|+\frac{1}{2}u_H^2\|f(z) - \mu_n(z)\|^2
\end{align*}
By the standard single GP confidence bound~\eqref{eq:gp_conf_bound}, with probability $1 - \delta/d_S$, we have $|f_j(z) - \mu_{n,j}(z)| \le \beta_n(\delta/d_S) \sigma_{n,j}(z)$ for all $j$, and hence, applying a probability union bound, with probability $1-\delta$,
\[
\|f(z) - \mu_n(z)\| \le \beta_n(\delta/d_s) \|\sigma_n(z)\|.
\]
Substituting into the bound above, we obtain
\begin{equation*}
|v(f(z)) - v(\mu_n(z))| \le u_G\beta_n(\delta/d_s) \|\sigma_n(z)\|+\frac{1}{2}u_H^2\beta_n(\delta/d_s)^2\|\sigma_n(z)\|^2,
\end{equation*}
that completes the proof of Theorem~\ref{the:v_conf_bound}.

\section{Proof of Theorem~\ref{the:regret_bound}}\label{proof:the:regret_bound}

In this section, we provide a detailed proof of Theorem~\ref{the:regret_bound} on the regret performance of RL-GPS. 
We bound the total regret by analyzing the per-episode difference between the optimal value function and the value function of the  GP-TS-RL algorithm. We structure the proof in four main steps.

\textbf{First,} we decompose the per-step regret into two components: an immediate regret term arising from TS, and a recursive term capturing uncertainty in value propagation through the transition model.

\textbf{Second,} we bound the immediate regret using techniques inspired by those used in the analysis of TS in GP bandits. There are however certain challenges which are discussed below. 

\textbf{Third,} we unroll the recursion and accumulate these bounds over all steps within an episode. 

\textbf{Fourth,} we bound the cumulative regret by bounding the sum of posterior standard deviations in GPs, which appear in the third step, to complete the regret bound.

\subsection*{First step: Decomposing the per-step regret.}

We begin by analyzing the regret incurred at step $h$ of episode $k$. Let $a_{h,k} = \pi_k(s_{h,k})$ denote the action taken by the algorithm, and let $a^{\star}_{h,k} = \arg\max_{a \in \mathcal{A}} Q^{\star}_h(s_{h,k}, a)$ denote the optimal action at that state. The per-step regret is defined as the difference between the optimal and executed value functions:
\[
V^{\star}_h(s_{h,k}) - V^{\pi_k}_h(s_{h,k}).
\]

By the Bellman equation, this can be written as:
\begin{align}\nonumber
    &V_h^{\star}(s_{h,k}) - V_h^{\pi_k}(s_{h,k}) \\\nonumber
    &= (f_\text{R}(s_{h,k}, a^{\star}_{h,k}) + V_{h+1}^{\star}(f_\text{S}(s_{h,k}, a^{\star}_{h,k})) - \left( f_\text{R}(s_{h,k}, a_{h,k}) + V_{h+1}^{\pi_k}(f_\text{S}(s_{h,k}, a_{h,k})) \right) \\\nonumber
    &= (f_\text{R}(s_{h,k}, a^{\star}_{h,k}) + V_{h+1}^{\star}(f_\text{S}(s_{h,k}, a^{\star}_{h,k})) - \left( f_\text{R}(s_{h,k}, a_{h,k}) + V_{h+1}^{\star}(f_\text{S}(s_{h,k}, a_{h,k})) \right) \\\nonumber
    &\quad + \left( V_{h+1}^{\star}(f_\text{S}(s_{h,k}, a_{h,k})) - V_{h+1}^{\pi_k}(f_\text{S}(s_{h,k}, a_{h,k})) \right) \\\label{eq:decompproof}
    &= Q^{\star}_h(s_{h,k}, a^{\star}_{h,k}) - Q^{\star}_h(s_{h,k}, a_{h,k}) + \left( V_{h+1}^{\star}(s_{h+1,k}) - V_{h+1}^{\pi_k}(s_{h+1,k}) \right),
\end{align}
where the first equality follows from the definition of the value function, the second adds and subtracts the term $V_{h+1}^{\star}(f_\text{S}(s_{h,k}, a_{h,k}))$, and the third rewrites the expression using the definition of $Q^{\star}_h$ and noting $s_{h+1,k} = f_\text{S}(s_{h,k}, a_{h,k})$.

We split this expression into two terms:
\begin{align}\label{eq:reg_dec}
    V_h^{\star}(s_{h,k}) - V_h^{\pi_k}(s_{h,k}) &=\underbrace{Q^{\star}_h(s_{h,k}, a^{\star}_{h,k}) - Q^{\star}_h(s_{h,k}, a_{h,k})}_{\text{(immediate regret)}} + \underbrace{\left(V_{h+1}^{\star}(s_{h+1,k}) - V_{h+1}^{\pi_k}(s_{h+1,k}))  \right)}_{\text{(recursive part of regret)}}.
\end{align}

The first term captures the immediate regret incurred by TS. The second term reflects the recursive component of regret, which arises due to uncertainty in the transition model and its impact on value propagation. We next proceed to bound the immediate regret term.

\subsection*{Second step: Bounding the immediate regret (TS).}
This term captures the suboptimality of the action $a_{h,k}$ chosen by TS, relative to the optimal action $a^{\star}_{h,k}$, under the target function $Q_h^{\star}(\cdot, \cdot)$. The analysis presents two key challenges compared to the standard Thompson sampling analysis in kernel bandits~\citep{chowdhury2017kernelized}:

\begin{enumerate}
    \item The target function $Q_h^{\star}(\cdot, \cdot) = f_\text{R}(\cdot, \cdot) + V_{h+1}^{\star}(f_\text{S}(\cdot, \cdot))$ is more complex, as it has a recursive and compositional structure involving both the reward and value functions and cannot be directly modeled as a GP.
    \item The TS algorithm does not sample directly from the posterior of this target function, but instead from the posterior of a proxy $Q_h$ defined in the algorithm.
\end{enumerate}

To address both challenges, we create upper and lower confidence bounds $Q^u$ and $Q^l$ for both $Q_h$ and $Q_h^{\star}$. 

Recall the following upper and lower confidence bounds from Definition~\ref{def:up_low_value}. 
We define the upper confidence bounds recursively as:
\begin{align*}
    Q_{h,k}^{u}(s,a) = \mu_{R,k}(s,a) + V_{h+1,k}^{u}(\mu_{S,k}(s,a)) + \xi_k(s, a), ~~
    V_{h,k}^{u}(s)   = \max_{a \in \cA} Q_{h,k}^{u}(s,a),
\end{align*}
initialized with $V_{H+1,k}^{u}(s) = 0$.
Similarly, the lower confidence bounds are defined as:
\begin{align*}
    Q_{h,k}^{l}(s,a) = \mu_{R,k}(s,a) + V_{h+1,k}^{l}(\mu_{S,k}(s,a)) - \xi_k(s, a), ~~
    V_{h,k}^{l}(s)  = \max_{a \in \cA} Q_{h,k}^{l}(s,a).
\end{align*}
initialized with $V_{H+1,k}^{l}(s) = 0$. 
The confidence width $\xi_k(s, a)$ is given by:
\begin{equation}
    \xi_{k}(s,a) = \beta_k(\delta/Td)\sigma_{R,k}(s,a) 
    + G\beta_k(\delta/Td)\|\sigma_{S,k}(s,a)\| 
    + \frac{1}{2} H\beta_k(\delta/Td)^2 \|\sigma_{S,k}(s,a)\|^2,
\end{equation}
which is based on the confidence bound from Theorem~\ref{the:v_conf_bound}.

Also recall Corollary~\ref{cor:prob_conf}.  
With probability at least $1 - \delta$, the optimal and proxy value functions are bounded by the high-probability confidence intervals:
\[
    Q_{h,k}^{l}(s,a) \le Q_{h,k}(s,a), \quad Q_h^{\star}(s,a) \le Q_{h,k}^{u}(s,a),
\]
for all $(h, k, s, a)$. Let us denote this event as $\cE$.

Following the analysis technique used in the kernel bandit setting~\citep{chowdhury2017kernelized}, we aim to show:
\begin{equation}
    \label{eq:im_reg}
    Q_h^{\star}(s_{h,k}, a^{\star}_{h,k}) - Q_h^{\star}(s_{h,k}, a_{h,k}) \le c\, \xi_{k-1}(s_{h,k}, a_{h,k}),
\end{equation}
for some universal constant $c> 0$.

To this end, we define the \emph{saturated set} of actions at step $(h,k)$ as:
\begin{equation}
    \cS_{h,k} := \left\{ a \in \cA ~:~ Q_h^{\star}(s_{h,k}, a_{h,k}^\star) - Q_h^{\star}(s_{h,k}, a) > 2 \, \xi_{k-1}(s_{h,k}, a) \right\}.
\end{equation}
Intuitively, $\cS_{h,k}$ includes actions that are significantly suboptimal under the true value function $Q_h^\star$, with a suboptimality gap that exceeds twice their confidence width.

We now prove a loose lower bound on the probability of selecting an action from an unsaturated set. Specifically:
\begin{align*}
    \Pr\left[ a_{h,k} \notin \cS_{h,k} \right] 
    &\ge \Pr\left[
        Q_{h,k}(s_{h,k}, a^{\star}_{h,k}) > Q_{h,k}(s_{h,k}, a) \right. \quad \left. \forall a \in \cS_{h,k} \right] \\
    &\ge \Pr\left[
        Q_{h,k}(s_{h,k}, a^{\star}_{h,k}) > Q^{\star}_h(s_{h,k}, a^{\star}_{h,k}) ~ \wedge \right. \\
    &\quad \left. Q^{\star}_h(s_{h,k}, a^{\star}_{h,k}) > Q_{h,k}(s_{h,k}, a), ~ \forall a \in \cS_{h,k}
    \right] \\
    &\ge \Pr\left[ Q_{h,k}(s_{h,k}, a^{\star}_{h,k}) > Q^{\star}_h(s_{h,k}, a^{\star}_{h,k}) \right] - \Pr\left[ \bar{\cE} \right] \\
    &\ge \frac{1}{2} - \delta.
\end{align*}

The first inequality holds because $a^{\star}_{h,k}$ is, by definition, the optimal action under $Q_h^\star$ and therefore saturated.  
The second step follows by decomposing the event into two sufficient conditions.  
To see the third line, observe that under the event $\cE$, for any saturated action $a \in \cS_{h,k}$, we have:
\begin{align*}
    Q_{h,k}(s_{h,k}, a) 
    &\le Q_h^\star(s_{h,k}, a) + 2\xi_{k-1}(s_{h,k}, a) 
    \le Q_h^\star(s_{h,k}, a^{\star}_{h,k}).
\end{align*}
The fourth inequality follows from the symmetry and probability bound for $\cE$ from Corollary~\ref{cor:prob_conf}.

Let $b_{h,k} = \arg\min_{a \in \cA \setminus \cS_{h,k}} \xi_{k-1}(s_{h,k}, a)$ denote the unsaturated action with the smallest confidence width. Then, using the law of total expectation:
\begin{align}\nonumber
    \mathbb{E} \left[ \xi_{k-1}(s_{h,k}, a_{h,k}) \right] 
    &\ge \mathbb{E} \left[ \xi_{k-1}(s_{h,k}, a_{h,k}) \mid a_{h,k} \notin \cS_{h,k} \right]  \Pr\left[ a_{h,k} \notin \cS_{h,k} \right] \\\label{eq:bk}
    &\ge \xi_{k-1}(s_{h,k}, b_{h,k}) ( \frac{1}{2} - \delta ).
\end{align}

We now upper bound the immediate regret using $b_{h,k}$ as a reference:
\begin{align*}
    & Q_h^\star(s_{h,k}, a^{\star}_{h,k}) - Q_h^\star(s_{h,k}, a_{h,k}) \\
    &= \left( Q_h^\star(s_{h,k}, a^{\star}_{h,k}) - Q_h^\star(s_{h,k}, b_{h,k}) \right) + \left( Q_h^\star(s_{h,k}, b_{h,k}) - Q_h^\star(s_{h,k}, a_{h,k}) \right) \\
    &\le 2 \, \xi_{k-1}(s_{h,k}, b_{h,k}) 
    + \left( Q_{h,k}(s_{h,k}, b_{h,k}) + \xi_{k-1}(s_{h,k}, b_{h,k}) \right) \\
    &\quad - \left( Q_{h,k}(s_{h,k}, a_{h,k}) - \xi_{k-1}(s_{h,k}, a_{h,k}) \right) \\
    &\le 3 \, \xi_{k-1}(s_{h,k}, b_{h,k}) + \xi_{k-1}(s_{h,k}, a_{h,k}) \\
    &\le \left( \frac{3}{\frac{1}{2} - \delta} + 1 \right) \xi_{k-1}(s_{h,k}, a_{h,k}),
\end{align*}
where the first inequality adds and subtract the value at $b_{h,k}$, the first inequality uses definitions of the set $\cS$, $b_{h,k}$ and event $\cE$ 
and the
final step uses the earlier bound on $\xi_{k-1}(s_{h,k}, b_{h,k})$.

This completes the bound on immediate regret in terms of the confidence width at the selected action.

\subsection*{Third step: Bounding the episode regret.}

From the per-step regret decomposition~\eqref{eq:reg_dec} and the bound on the immediate regret~\eqref{eq:im_reg} and using a telescoping sum over steps $h = 1$ to $H$, we obtain the following bound on the regret incurred in episode $k$:
\begin{align*}
    V_1^{\star}(s_{1,k}) - V_1^{\pi_k}(s_{1,k}) \le c \sum_{h=1}^H \xi_{k-1}(s_{h,k}, a_{h,k}).
\end{align*}

\subsection*{Fourth step: Bounding the total regret.}

Summing the episode regret over $k = 1$ to $K$ episodes, we have:
\begin{align*}
    \Reg(T) &\le c \sum_{k=1}^K \sum_{h=1}^H \xi_{k-1}(s_{h,k}, a_{h,k}).
\end{align*}

Replacing $\xi_k$, we get:
\begin{align}\nonumber
    \Reg(T) &\le \beta_K(\delta/Td)\sum_{k=1}^K\sum_{h=1}^H \left( \sigma_{R,k-1}(s_{h,k}, a_{h,k}) + u_G \left\| \sigma_{S,k-1}(s_{h,k}, a_{h,k}) \right\| \right) \\\label{eq:regsumun}
    &\quad +\frac{u_H\beta_K^2(\delta/Td)}{2}\sum_{k=1}^K\sum_{h=1}^H\left\|\sigma_{S,k-1}(s_{h,k}, a_{h,k})\right\|^2
\end{align}

The sum of sequentially conditioned standard deviations of a sequence of observations from a GP often appears in the analysis of regret bounds in Bayesian optimization. A classical result by~\citet{srinivas2009gaussian} shows that in the case of a single output GP, for a sequence of inputs ${z_1, z_2, \ldots, z_T}$, we have: 
\begin{equation*} \sum_{i=1}^n \sigma^2_{i-1}(z_i) \le C \log\det\left(\bI_n + \frac{1}{\lambda^2} \bK_n\right), 
\end{equation*} 
where $\lambda$ is the GP noise parameter, $\bK_n$ is the kernel matrix over the observed inputs and $C={2}/{\log(1+1/\lambda^2)}$ is a constant. This result is sometimes referred to as the elliptical potential lemma, especially in the special case of linear kernels~\citep{carpentier2020elliptical}.

A direct application of this result to our setting faces two key challenges: 
\emph{i)} We model transitions using a multi-output GP. Naively applying the bound to each output dimension separately results in a regret bound that scales suboptimally with $d_S$, the dimension of the state space. 
\emph{ii)} Our regret decomposition involves a double sum over episodes $k$ and steps $h$, but within each episode, the observations across $h$ are not sequential updates. This structure leads to an additional scaling with the episode length $H$.

We address both challenges. First, we derive a new elliptical potential lemma tailored for multi-output GPs, which improves the dependence on $d_S$ in the regret bound. That is given in Lemma~\ref{lem:multi_gp_potential}. Second, to improve the $H$ dependence, we leverage tools and techniques from the analysis of GPs under batch observations and delayed feedback~\citep{calandriello2020near} to tighten the bound with respect to $H$ (that roughly speaking can be understood as delay in updating the GP model).


\begin{lemma}[Elliptical potential for multi-output GPs with delayed updates]
\label{lem:delayed_multioutput_ep}
Let $f: \cZ \to \mathbb{R}^d$ be a $d$-dimensional function modeled as a multi-output GP with a matrix-valued kernel $k$. Let $z_1, \ldots, z_T \in \cZ$ be a sequence of input points and suppose the GP posterior is only updated every $H$ steps, so that the standard deviation at time $t$ is $\sigma_{H\lfloor (t-1)/H \rfloor}(z_t) \in \mathbb{R}^d$. Then,
\[
\sum_{t=1}^T \|\sigma_{H\lfloor (t-1)/H \rfloor}(z_t)\| 
\le \sqrt{ \frac{4 \Gamma(T)}{\log(1 + \lambda^{-2})} \left( T + \frac{4H^2 \Gamma(T/H)}{\log(1 + \lambda^{-2})} \right) }.
\]

\end{lemma}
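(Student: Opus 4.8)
The plan is to reduce the sum of delayed standard-deviation norms to the already-controlled sum of \emph{sequentially} conditioned variances, paying a separately bounded penalty for the staleness of the posterior within each batch. First I would apply Cauchy--Schwarz to pass from norms to squared norms,
\[
\sum_{t=1}^T \|\sigma_{H\lfloor (t-1)/H\rfloor}(z_t)\| \;\le\; \sqrt{T \sum_{t=1}^T \|\sigma_{H\lfloor (t-1)/H\rfloor}(z_t)\|^2},
\]
so that it suffices to bound the sum of squared delayed norms. Writing $m(t) = H\lfloor (t-1)/H\rfloor$ for the last batch boundary at or before $t-1$, I would then split each delayed variance into the sequential variance plus the extra variance caused by not yet conditioning on the within-batch points,
\[
\|\sigma_{m(t)}(z_t)\|^2 = \|\sigma_{t-1}(z_t)\|^2 + \big(\|\sigma_{m(t)}(z_t)\|^2 - \|\sigma_{t-1}(z_t)\|^2\big).
\]
The first term, summed over $t$, is exactly what the multi-output elliptical potential lemma controls: by Lemma~\ref{lem:multi_gp_potential}, $\sum_{t=1}^T\|\sigma_{t-1}(z_t)\|^2 \le C\,\Gamma(T)$ with $C = 2/\log(1+\lambda^{-2})$.

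The substance of the argument is bounding the aggregate \emph{delay penalty} $\sum_t\big(\|\sigma_{m(t)}(z_t)\|^2 - \|\sigma_{t-1}(z_t)\|^2\big)$. For this I would use the trace form of the multi-output variance-reduction identity: noting $\|\sigma_n(z)\|^2 = \operatorname{tr}(\Sigma_n(z))$, the gap equals the total reduction in posterior variance at $z_t$ obtained by sequentially conditioning on the within-batch predecessors $z_{m(t)+1},\dots,z_{t-1}$. Each incremental reduction is a Schur-complement term that I would bound via the matrix Cauchy--Schwarz inequality applied to the cross-covariance block $\Sigma_{j-1}(z_t,z_j)$, together with monotonicity of the posterior covariance under conditioning, namely $\Sigma_{j-1}(z_t)\preceq\Sigma_{m(t)}(z_t)$ since $j-1\ge m(t)$. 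This produces a factor $\lambda^{-2}$ and couples the penalty to within-batch sums of variances. Summing first over the $H$ steps of a single batch and then over the $T/H$ batches—treating each batch through its shared conditioning set, so that the across-batch aggregation behaves as a sequential process over $T/H$ coarse updates—lets me bound the batchwise contributions by the information gain $\Gamma(T/H)$ of the batch-level sequence, while the two within-batch summations each contribute a factor $H$. This is precisely where the delayed-feedback machinery of \citet{calandriello2020near} enters, and it yields a penalty of order $C^2 H^2\,\Gamma(T)\,\Gamma(T/H)/T$.

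Combining the two bounds gives $\sum_t\|\sigma_{m(t)}(z_t)\|^2 \le 2C\,\Gamma(T) + 4C^2 H^2\,\Gamma(T)\,\Gamma(T/H)/T$, and substituting into the Cauchy--Schwarz estimate, multiplying through by $T$ inside the root, and expanding $C = 2/\log(1+\lambda^{-2})$ produces exactly the stated bound. I expect the delay-penalty step to be the main obstacle: the difficulty is to aggregate the within-batch variance inflation across all $T/H$ batches so that it collapses to the clean product $H^2\,\Gamma(T/H)$, rather than the crude $H\,\Gamma(T)$ one obtains by bounding each stale variance by its value at the batch boundary, or the exponential-in-information-gain factor that a naive ratio bound would incur. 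Carrying this out in the multi-output setting—so that the matrix-valued correlations are exploited and no spurious factor of the state dimension $d_{\text{S}}$ appears—requires carefully pairing the variance-reduction identity with Lemma~\ref{lem:multi_gp_potential} rather than applying scalar results coordinatewise.
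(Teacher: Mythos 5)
There is a genuine gap, and it sits in your very first step. By applying Cauchy--Schwarz in the form $\sum_{t}\|\sigma_{m(t)}(z_t)\|\le\sqrt{T\sum_t\|\sigma_{m(t)}(z_t)\|^2}$ with $m(t)=H\lfloor(t-1)/H\rfloor$, you commit yourself to proving
\[
\sum_{t=1}^T\|\sigma_{m(t)}(z_t)\|^2\;\le\;2C\,\Gamma(T)+\frac{4C^2H^2\,\Gamma(T)\,\Gamma(T/H)}{T},
\qquad C=\tfrac{2}{\log(1+\lambda^{-2})},
\]
i.e.\ a delay penalty that decays like $\Gamma(T)\Gamma(T/H)/T$. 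This intermediate claim is false in general. Take $d=1$, a normalized kernel, and all inputs equal to a single point $z$: within the first batch alone the delay penalty is $\sum_{i=0}^{H-1}\bigl(\sigma^2_0(z)-\sigma^2_i(z)\bigr)=\sum_{i=0}^{H-1}\frac{i}{\lambda^2+i}=\Omega(H)$, a quantity independent of $T$, whereas your claimed bound tends to $0$ as $T\to\infty$ (and more broadly, when $H\gg\Gamma(T)$ the squared delayed sum is $\Omega(H)$ while your right-hand side is $O(\Gamma(T))$). So the delay-penalty estimate you posit --- which is asserted via an appeal to Schur complements and \citet{calandriello2020near} rather than derived --- cannot hold, and the route collapses even though the final lemma is true. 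The cheap bounds one can actually prove for the penalty (e.g.\ $O(H^2\Gamma(T/H)/\lambda^2)$ via the cross-covariance Cauchy--Schwarz argument you sketch) would, after your global $\sqrt{T\cdot(\cdot)}$ step, produce a term $\sqrt{T\,H^2\Gamma(T/H)}$ rather than the much smaller $\sqrt{\Gamma(T)\,H^2\Gamma(T/H)}$ in the statement.

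The paper avoids this by never paying a bare factor of $T$ against the delayed sum. It uses the \emph{multiplicative} variance-ratio inequality (Lemma~\ref{lem:multi_variance_ratio}), $\|\sigma_{m(t)}(z_t)\|\le\|\sigma_t(z_t)\|\sqrt{1+\sum_{j=m(t)+1}^{t}\|\sigma_{m(t)}(z_j)\|^2}$, and applies Cauchy--Schwarz to this \emph{product}, so that the sequential sum $\sum_t\|\sigma_t(z_t)\|^2\le C\,\Gamma(T)$ (Lemma~\ref{lem:multi_gp_potential}) multiplies the inflation factor $T+H\sum_t\|\sigma_{m(t)}(z_t)\|^2$; the latter is then controlled by the crude subsequence bound $\sum_t\|\sigma_{m(t)}(z_t)\|^2\le CH\,\Gamma(T/H)$ obtained by splitting the $T$ steps into $H$ interleaved sequential subsequences. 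This pairing is exactly what replaces the factor $T$ in front of $H^2\Gamma(T/H)$ by $\Gamma(T)$. If you want to salvage your write-up, replace the additive decomposition and the leading $\sqrt{T\sum(\cdot)^2}$ step with this multiplicative ratio bound; your instinct that the ``crude $H\,\Gamma(T/H)$ bound'' must enter only inside the square root, damped by $\Gamma(T)$, is correct, but it is the product form of Cauchy--Schwarz, not a sharper delay-penalty estimate, that achieves it.
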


Applying Lemma~\ref{lem:delayed_multioutput_ep} to the regret bound in terms of uncertainties~\eqref{eq:regsumun}, we obtain
\begin{align}
    \Reg(T)=\cO\left( \log(Td/\delta)\sqrt{T\Gamma(T)} \right).
\end{align}

\section{Auxiliary proofs} \label{app:auxiliary}

\subsection{Proof of Corollary~\ref{cor:prob_conf}}
We prove the lower bound by induction; the upper bound can be shown similarly.

As the base case, observe that $V_{H+1} = V^{\star}_{H+1} = V^{l}_{H+1} = 0$.

Now consider the inductive step. We compare the value of $Q^l_{h,k}(s,a)$ to $Q^\star_h(s,a)$:
\begin{align*}
Q^{l}_{h,k}(s,a) - Q^{\star}_h(s,a) 
&= \mu_{R,k}(s,a) + V^{l}_{h+1,k}(\mu_{S,k}(s,a)) - \xi_k(s,a) - \left(f_\text{R}(s,a) + V^{\star}_{h+1}(f_\text{S}(s,a))\right) \\
&= \underbrace{\mu_{R,k}(s,a) - f_\text{R}(s,a) + V^{\star}_{h+1}(\mu_{S,k}(s,a)) - V^{\star}_{h+1}(f_\text{S}(s,a)) - \xi_k(s,a)}_{\text{Term 1}} \\
&\quad + \underbrace{V^{l}_{h+1,k}(\mu_{S,k}(s,a)) - V^{\star}_{h+1}(\mu_{S,k}(s,a))}_{\text{Term 2}} \\
&\le 0.
\end{align*}

The first line follows from the definitions of $Q^{l}_{h,k}$ and $Q^{\star}_h$. The second line is obtained by adding and subtracting $V^{\star}_{h+1}(\mu_{S,k}(s,a))$, followed by regrouping terms. 
The inequality holds since Term 1 is non-positive due to the confidence bounds in Theorem~\ref{the:v_conf_bound},
and Term 2 is non-positive by the induction hypothesis.

Next, we extend the argument to the value function:
\begin{align*}
V^{l}_{h}(s) - V^{\star}_{h}(s) 
&= \max_{a \in \cA} Q^{l}_{h}(s,a) - \max_{a \in \cA} Q^{\star}_{h}(s,a) \\
&\le \max_{a \in \cA} \left[ Q^{l}_{h}(s,a) - Q^{\star}_{h}(s,a) \right] \\
&\le 0,
\end{align*}
which completes the inductive proof that $Q^{l}_{h,k}(s,a) \le Q^{\star}_h(s,a)$ and $V^{l}_h(s) \le V^\star_h(s)$ for all $s, a, h$.

We know that $Q^l_{h,k}(s,a) \le Q_{h,k}(s,a)$ for all $(s,a,h,k)$. 
For the inductive step, consider,
\begin{align*}
&Q^l_{h,k}(s,a) - Q_{h,k}(s,a) \\
&= \mu_{R,k}(s,a) + V^l_{h+1,k}(\mu_{S,k}(s,a)) - \xi_k(s,a)  - \left( f_{R,k}(s,a) + V_{h+1,k}(f_{S,k}(s,a)) \right) \\
&= \underbrace{ \mu_{R,k}(s,a) - f_{R,k}(s,a) + V_{h+1,k}(\mu_{S,k}(s,a)) - V_{h+1,k}(f_{S,k}(s,a)) - \xi_k(s,a) }_{\text{Term 1}} \\
&\quad + \underbrace{ V^l_{h+1,k}(\mu_{S,k}(s,a)) - V_{h+1,k}(\mu_{S,k}(s,a)) }_{\text{Term 2}} \\
&\le 0.
\end{align*}

The decomposition follows by adding and subtracting $V_{h+1,k}(\mu_{S,k}(s,a))$ and regrouping terms.
{Term 1} is non-positive due to the high-probability confidence bound (Theorem~\ref{the:v_conf_bound}). Term 2 is non-positive by the induction hypothesis.

Hence, we conclude $Q^l_{h,k}(s,a) \le Q_{h,k}(s,a)$ for all $(s,a)$.

Extending to the value function:
\begin{align*}
V^l_{h,k}(s) - V_{h,k}(s) 
&= \max_{a \in \cA} Q^l_{h,k}(s,a) - \max_{a \in \cA} Q_{h,k}(s,a) \\
&\le \max_{a \in \cA} \left( Q^l_{h,k}(s,a) - Q_{h,k}(s,a) \right) \\
&\le 0.
\end{align*}

This completes the inductive proof that $Q^l_{h,k}(s,a) \le Q_{h,k}(s,a)$ and $V^l_{h,k}(s) \le V_{h,k}(s)$ for all~$s, a, h$.
The upper bounds, i.e., $Q^{\star}_h(s,a), Q_{h,k}(s,a) \le Q^u_{h,k}(s,a)$ for all $s, a, h$, are proven analogously using similar argument.

\subsection{Proof of Lemma~\ref{lem:multi_gp_potential}}
\label{proof:lem:multi_gp_potential}

We consider a $d$-dimensional GP $f: \cZ \to \mathbb{R}^d$ and a sequence of inputs $z_1, \ldots, z_T$. Define the full observation vector as:
\[
y_{1:T} = 
\begin{bmatrix}
f(z_1) + \varepsilon_1 \\
\vdots \\
f(z_T) + \varepsilon_T
\end{bmatrix}
\in \mathbb{R}^{Td}, \quad \varepsilon_t \sim \mathcal{N}(0, \lambda^2 I_d).
\]

The mutual information between $y_{1:T}$ and the latent function values is:
\[
\mathcal{I}_T = I(y_{1:T}; f(z_1), \ldots, f(z_T)) = \frac{1}{2} \log \det \left( \bI_{Td} + \lambda^{-2} \bK_T \right),
\]
where $\bK_T \in \mathbb{R}^{Td \times Td}$ is the prior kernel matrix over all outputs.

Let $\sigma_{t-1}(z_t) \in \mathbb{R}^d$ be the vector of posterior standard deviations at $z_t$ given observations up to time $t-1$. Define the total uncertainty:
\[
S_T := \sum_{t=1}^T \| \sigma_{t-1}(z_t) \|^2 = \sum_{t=1}^T \sum_{j=1}^d \sigma^2_{t-1,j}(z_t).
\]

We apply the scalar inequality (used in~\cite{srinivas2009gaussian}):
\[
\sigma^2 \le \frac{1}{\log(1 + \lambda^{-2})}  \log\left(1 + \frac{\sigma^2}{\lambda^2} \right),
\quad \text{for all } \sigma^2 \in [0, 1].
\]

Applying this to each term $\sigma^2_{t-1,j}(z_t)$ and summing, we obtain:
\[
S_T \le \frac{1}{\log(1 + \lambda^{-2})} \sum_{t=1}^T \sum_{j=1}^d \log\left(1 + \frac{\sigma^2_{t-1,j}(z_t)}{\lambda^2} \right).
\]

Since $y_{1:T}$ is jointly Gaussian, the total sum of these log-terms is bounded by $2\mathcal{I}_T$, giving:
\[
S_T \le \frac{2}{\log(1 + \lambda^{-2})}  \mathcal{I}_T.
\]

Thus, we conclude:
\[
\sum_{t=1}^T \| \sigma_{t-1}(z_t) \|^2 \le C  \mathcal{I}_T,
\quad \text{with } C = \frac{2}{\log(1 + \lambda^{-2})}.
\]

\subsection{Proof of Lemma~\ref{lem:delayed_multioutput_ep}}

We begin by naively applying the elliptical potential lemma to the same step index across episodes
\begin{align}\nonumber
\sum_{t=1}^T \|\sigma_{H\lfloor (t-1)/H \rfloor}(z_t)\| &\le \sum_{h=1}^H\sum_{j=1}^K\|\sigma_{H(j-1)+h}(z_{Hj+h})\|^2\\\label{eq:naiveboundpotential}
&\le CH\Gamma(K)
\end{align}

To improve on this, we use the following inequality proven in Lemma~\ref{lem:multi_variance_ratio}, for any $z$ and $t'<t$,
\[
\|\sigma_{t'}(z)\|^2 \le \|\sigma_{t}(z)\|^2 \left(1 + \sum_{j=t'+1}^{t} \|\sigma_{t'}(z_j)\|^2 \right).
\]
Applying this to the case where the model is updated every $H$ steps, for each $t \in [T]$, we let $t' = H\lfloor (t-1)/H \rfloor$. Then,
\[
\|\sigma_{t'}(z_t)\| \le \|\sigma_{t}(z_t)\| \sqrt{1 + \sum_{j = t'+1}^{t} \|\sigma_{t'}(z_j)\|^2 }.
\]

Now, summing over all $t = 1, \dots, T$, we apply the Cauchy–Schwarz inequality:
\begin{align*}
\sum_{t=1}^T \|\sigma_{H\lfloor (t-1)/H \rfloor}(z_t)\|
&\le \sum_{t=1}^T \|\sigma_t(z_t)\|  \sqrt{1 + \sum_{j=H\lfloor (t-1)/H \rfloor+1}^{t} \|\sigma_{H\lfloor (t-1)/H \rfloor}(z_j)\|^2} \\
&\le \left( \sum_{t=1}^T \|\sigma_t(z_t)\|^2 \right)^{1/2}  \left( T + H \sum_{t=1}^T \|\sigma_{H\lfloor (t-1)/H \rfloor}(z_t)\|^2 \right)^{1/2}.
\end{align*}

We now substitute the same term with the looser bound given earlier in~\eqref{eq:naiveboundpotential},
\[
\sum_{t=1}^T \|\sigma_{H\lfloor (t-1)/H \rfloor}(z_t)\| 
\le \sqrt{ \frac{2\Gamma(T)}{\log(1 + \lambda^{-2})} \left( T + \frac{2H^2 \Gamma(K)}{\log(1 + \lambda^{-2})} \right) }.
\]

This completes the proof.

\begin{lemma}[Variance ratio inequality for multi-output GPs]
\label{lem:multi_variance_ratio}
Let $f: \mathcal{Z} \to \mathbb{R}^d$ be a $d$-dimensional function modeled as a multi-output GP with a matrix-valued kernel $k$. Let $\sigma_t(z) \in \mathbb{R}^d$ denote the vector of posterior marginal standard deviations at point $z$ given $t$ observations. Then, for any $z \in \mathcal{Z}$ and $t' < t$,
\[
1 \le \frac{\|\sigma_{t'}(z)\|^2}{\|\sigma_t(z)\|^2} \le 1 + \sum_{j=t'+1}^t \|\sigma_{t'}(z_j)\|^2.
\]
\end{lemma}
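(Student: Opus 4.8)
The plan is to prove the two inequalities separately, reducing the multi-output statement to an algebraic fact about a regularized feature-space covariance. First I would record the lower bound, which is just monotonicity of GP posterior uncertainty under conditioning: since the observations available at round $t'$ are a subset of those at round $t > t'$, the posterior covariance matrices satisfy the Loewner ordering $\Sigma_{t'}(z) \succeq \Sigma_t(z)$. Taking traces and using $\|\sigma_t(z)\|^2 = \operatorname{tr}(\Sigma_t(z))$ gives $\|\sigma_{t'}(z)\|^2 \ge \|\sigma_t(z)\|^2$, i.e.\ the ratio is at least $1$.

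For the upper bound I would pass to a feature-space (information-form) representation of the multi-output posterior. Writing the matrix-valued kernel as $k(z,z') = \Phi(z)^\top \Phi(z')$ for a feature map $\Phi(z) \in \mathbb{R}^{p\times d}$ (equivalently, viewing the model as a scalar GP on the augmented input space $\mathcal{Z}\times[d]$ with i.i.d.\ output noise), the marginal variances assemble into $\|\sigma_t(z)\|^2 = \operatorname{tr}\!\big(\Phi(z)^\top V_t^{-1}\Phi(z)\big)$ up to the noise scaling, where $V_t = \lambda^2 I_p + \sum_{s\le t}\Phi(z_s)\Phi(z_s)^\top$ is the regularized design operator. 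The crucial structural observation is that incorporating rounds $t'+1,\dots,t$ simply adds $\Delta = \sum_{s=t'+1}^t \Phi(z_s)\Phi(z_s)^\top$ to $V_{t'}$.

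The core of the argument is then a Rayleigh-quotient estimate. Setting $B = V_{t'}^{-1/2}\Phi(z)$ and $M = V_{t'}^{-1/2}\Delta\,V_{t'}^{-1/2}\succeq 0$, I would write $\Phi(z)^\top V_t^{-1}\Phi(z) = B^\top (I+M)^{-1} B$ and use the operator inequality $(I+M)^{-1}\succeq (1+\lambda_{\max}(M))^{-1} I$ to get $\operatorname{tr}(B^\top (I+M)^{-1}B) \ge (1+\lambda_{\max}(M))^{-1}\operatorname{tr}(B^\top B)$. Dividing and bounding $\lambda_{\max}(M)\le\operatorname{tr}(M)$ then yields a ratio at most $1+\operatorname{tr}(M)$. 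Finally I would identify $\operatorname{tr}(M) = \operatorname{tr}(V_{t'}^{-1}\Delta) = \sum_{s=t'+1}^t \operatorname{tr}\!\big(\Phi(z_s)^\top V_{t'}^{-1}\Phi(z_s)\big) = \sum_{s=t'+1}^t \|\sigma_{t'}(z_s)\|^2$ (again up to the noise factor), which is precisely the claimed bound.

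I expect the main obstacle to be the step where the multi-output structure blocks a verbatim use of the scalar variance-ratio argument: because $\|\sigma_t(z)\|^2$ is the trace of a $d\times d$ matrix quadratic form rather than a scalar one, the usual manipulation of dividing two scalar quadratic forms does not apply, and one must instead route through the Loewner bound $(I+M)^{-1}\succeq(1+\lambda_{\max}(M))^{-1}I$ together with trace monotonicity to recover a scalar inequality. A secondary bookkeeping point is the noise normalization: carrying the $\lambda^2$ inside $V_t$ produces a $\lambda^{-2}$ factor in front of $\sum_{s}\|\sigma_{t'}(z_s)\|^2$, so obtaining the stated clean constant relies on the paper's variance convention (or equivalently $\lambda=1$), which I would flag explicitly rather than silently absorb.
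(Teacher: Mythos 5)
Your proof is correct, but it takes a different route from the paper's. The paper's proof is a two-line reduction: it invokes the known \emph{scalar} variance-ratio inequality (Lemma 4 of \citet{calandriello2020near}, Proposition A.1 of \citet{calandriello2022scaling}) separately for each output coordinate $j$, obtaining $\sigma_{t',j}^2(z)/\sigma_{t,j}^2(z) \le 1 + \sum_{i=t'+1}^{t}\sigma_{t',j}^2(z_i)$, and then aggregates via the mediant-type bound $\bigl(\sum_j a_j\bigr)/\bigl(\sum_j b_j\bigr) \le \max_j (a_j/b_j)$ before relaxing the max to the full double sum $\sum_j\sum_i \sigma_{t',j}^2(z_i) = \sum_i \|\sigma_{t'}(z_i)\|^2$. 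You instead re-prove the inequality from scratch in the multi-output feature space: writing $\|\sigma_t(z)\|^2 = \operatorname{tr}\bigl(\Phi(z)^\top V_t^{-1}\Phi(z)\bigr)$, using $(I+M)^{-1}\succeq (1+\lambda_{\max}(M))^{-1}I$ with trace monotonicity, and identifying $\operatorname{tr}(M)$ with $\sum_{s}\|\sigma_{t'}(z_s)\|^2$. Your route is longer but more self-contained, and it sidesteps a subtlety the paper glosses over: in a multi-output GP with a non-diagonal matrix-valued kernel, the marginal variance $\sigma_{t,j}^2(z)$ is conditioned on observations of \emph{all} $d$ outputs at each input, so the scalar lemma does not apply verbatim per coordinate with only the $j$-th components appearing in the sum; one must pass through the augmented index set $\mathcal{Z}\times[d]$ (which produces the full norm $\|\sigma_{t'}(z_i)\|^2$ on the right-hand side), exactly as your trace argument does automatically. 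Your flag about the $\lambda^{-2}$ normalization is also well taken: the cited scalar results carry a $1/\lambda^2$ factor inside the sum, which the paper's statement silently drops, so the clean constant in the lemma as stated implicitly assumes $\lambda = 1$ or an absorbed convention.
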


\begin{proof}
For each output dimension $j \in [d]$, the scalar variance update satisfies (see, e.g., Lemma 4 of~\cite{calandriello2020near} and Proposition A.1 of~\cite{calandriello2022scaling})
\[
\frac{\sigma_{t', j}^2(z)}{\sigma_{t,j}^2(z)} \le  1 + \sum_{i=t'+1}^t \sigma_{t',j}^2(z_i).
\]

Now summing over $j=1,\ldots,d$ we get:
\[
\frac{\|\sigma_{t'}(z)\|^2}{\|\sigma_t(z)\|^2}
= \frac{\sum_{j=1}^d \sigma_{t',j}^2(z)}{\sum_{j=1}^d \sigma_{t,j}^2(z)}
\le 1+\sum_{i=1}^d  \sum_{j=t'+1}^t \sigma_{t',i}^2(z_j)
= 1 + \sum_{j=t'+1}^t \|\sigma_{t'}(z_j)\|^2.
\]

Therefore,
\[
\frac{\|\sigma_{t'}(z)\|^2}{\|\sigma_t(z)\|^2}
\le 1 + \sum_{j=t'+1}^t \|\sigma_{t'}(z_j)\|^2.
\]

The lower bound $1 \le \|\sigma_{t'}(z)\|^2 / \|\sigma_t(z)\|^2$ holds since variance decreases monotonically as more data is observed. This completes the proof.
\end{proof}

\section{Discussion on the linear model of coregionalization}
\label{app:lmc}

A widely used and computationally convenient special case of multi-output GPs is the \emph{linear model of coregionalization} (LMC)~\citep{grzebyk1994multivariate, wackernagel2003multivariate}. In this model, the vector-valued function $f: \mathcal{Z} \to \mathbb{R}^d$ is expressed as a linear combination of $L$ independent latent Gaussian processes:
\begin{equation}
f_j(z) = \sum_{\ell=1}^L \alpha_{j\ell} g_\ell(z), \quad g_\ell \sim \gp(0, k^{(g)}),
\end{equation}
where $k^{(g)}: \mathcal{Z} \times \mathcal{Z} \to \mathbb{R}$ is a shared scalar kernel, and $\alpha \in \mathbb{R}^{d \times L}$ is a matrix of output mixing weights. This induces a matrix-valued kernel:
\begin{equation} \label{eq:lmc_cov}
k(z, z') = \mathbf{A} \, k^{(g)}(z, z'), \quad \text{where } \mathbf{A} = \alpha \alpha^\top \in \mathbb{R}^{d \times d}.
\end{equation}

Under this kernel structure, the block kernel matrix over the training data admits a Kronecker product decomposition:
\begin{equation} \label{eq:lmc_kronecker}
\mathbf{K}_n = \mathbf{A} \otimes \mathbf{K}^{(g)}_n,
\end{equation}
where $\mathbf{K}^{(g)}_n \in \mathbb{R}^{n \times n}$ is the input kernel matrix with $[\mathbf{K}^{(g)}_n]_{ij} = k^{(g)}(z_i, z_j)$. The cross-covariance matrix between test point $z$ and training data becomes:
\begin{equation}
\mathbf{k}_n(z) = \mathbf{A} \otimes \mathbf{k}^{(g)}_n(z) \in \mathbb{R}^{nd \times d},
\end{equation}
with $[\mathbf{k}^{(g)}_n(z)]_i = k^{(g)}(z_i, z)$.


This formulation is particularly useful when jointly modeling structured outputs such as reward and transition functions in reinforcement learning, as it captures both intra- and inter-output correlations while enabling scalable inference. We provide a brief discussion on the regret bounds with such a structured kernel.

\subsection{Information gain and regret bounds for LMC}

We analyze how the structure of the LMC affects the information gain term $\Gamma(T)$ appearing in the regret bound. Recall that $\Gamma(T)$ upper bounds the quantity $\frac{1}{2}\log\det(\bI_{Td} + \frac{1}{\lambda^2} \bK_T)$, where $\bK_T$ is the kernel matrix of the multi-output GP. Under the LMC structure, $\bK_T = \bA \otimes \bK_T^{(g)}$, where $\bA$ captures output correlations and $\bK_T^{(g)}$ is the kernel matrix corresponding to a shared latent GP. Using properties of Kronecker products and letting $\lambda_i$ denote the eigenvalues of $\bA$, we obtain:
\begin{align*}
    \log\det\left(\bI_{Td} + \frac{1}{\lambda^2} \bK_T \right)
    &= \log\det\left(\bI_{Td} + \frac{1}{\lambda^2} \bA \otimes \bK^{(g)}_T \right) \\
    &= \sum_{i=1}^d \log\det\left(\bI_T + \frac{\lambda_i}{\lambda^2} \bK^{(g)}_T \right).
\end{align*}

For the Matérn family of kernels, the information gain of the latent scalar GP is known to satisfy~\citep{vakili2021information, whitehouse2023sublinear}:
\begin{equation*}
    \Gamma^{(g)}(T) = \tilde{\cO} \left(\left(\frac{T}{\lambda^2}\right)^{\alpha}\right),
\end{equation*}
where $\alpha = \frac{d}{2\nu + d} < 1$ depends on the input dimension $d$ and the kernel smoothness parameter $\nu$.

Substituting into the sum, we obtain:
\begin{align}\nonumber
    \Gamma(T) &= \sum_{i=1}^d \tilde{\cO} \left( \left( \frac{T \lambda_i}{\lambda^2} \right)^{\alpha} \right) \\\label{eq:alphabound}
    &=   \tilde{\cO} \left(\left( \sum_{i=1}^d \lambda_i^{\alpha} \right) \left( \frac{T}{\lambda^2} \right)^{\alpha} \right).
\end{align}

In the general case without structure, a standard upper bound is given by:
\begin{equation*}
    \Gamma(T) = \tilde{\cO} \left( \left( \frac{Td}{\lambda^2} \right)^{\alpha} \right).
\end{equation*}

Comparing the two, we observe that when $\bA$ has low-rank behavior, specifically, when $\sum_{i=1}^d \lambda_i^{\alpha} \le d^{\alpha}$, the LMC-based bound in~\eqref{eq:alphabound} can be tighter. In particular, the regret bound becomes:
\begin{equation}
    \Reg = \tilde{\cO} \left( \left( \sum_{i=1}^d \lambda_i^{\frac{d}{2\nu + d}} \right) T^{\frac{d}{2\nu + d}} \right).
\end{equation}
This shows the effect of shared latent structure and output correlations on the regret bounds.

\section{Additional experiments}
\label{app:experiments}

In this section, we provide further information about our experiments and additional results. Note that the code we used to run these experiments is available at \url{https://github.com/jbayrooti/TS_regret_study}. 

\paragraph{Model training.} We model the reward and transition functions using a multi-output sparse variational Gaussian process, trained by maximizing the evidence lower bound (ELBO) with the Adam optimizer~\citep{diederik2014adam}. A shared base kernel (either RBF or Mat{\'e}rn) is used across outputs, and the outputs are linearly mixed according to a matrix following a linear model of coregionalization (LMC). The model uses 100 inducing points per output dimension, a zero mean function, and is optimized for 20 steps per iteration using GPyTorch~\citep{gardner2018gpytorch}. Full code is included in the supplementary material for reproducibility.

\paragraph{Kernel complexity.} We study the effect of kernel complexity on regret using synthetic MDPs generated from 200 different GP-sampled environments. For all GPs, we fix a random linear mixing matrix (see Appendix~\ref{app:lmc}),
\begin{equation}
\mathbf{A} = \begin{pmatrix} 
0.9926 & 0.2082 & 0.4968 \\
-0.3196 & 0.8869 & 0.1603 \\
0.1557 & -1.4231 & -1.3905
\end{pmatrix}. \nonumber
\end{equation}
In each trial, we sample new ground-truth reward and transition functions from the multi-output GP to define an MDP. The optimal value function is then computed for this MDP via finite-horizon value iteration with horizon $H=20$. We train RL-GPS (Algorithm~\ref{alg:TS}) for $K=1000$ episodes in each environment and record the cumulative regret. Since the mixing matrix is fixed, randomness within a trial arises only from the starting state distribution during rollouts. Across trials, randomness stems from variation in the sampled environments, which can differ significantly in difficulty. As a result, we observe a substantial right-skew in cumulative regret, with a few environments producing particularly challenging instances. To demonstrate this variability, we show the cumulative regret at the final episode across all trials for each kernel as well as the individual regret curves for each trial in Figure \ref{fig:extra_fixed_gp_exps}. All trials for each experiment run within 24 hours on one NVIDIA GeForce RTX 2080 Ti GPU. 

\begin{figure}[t]
    \centering
    \begin{subfigure}[b]{0.33\textwidth}
        \centering
        \includegraphics[width=\textwidth]{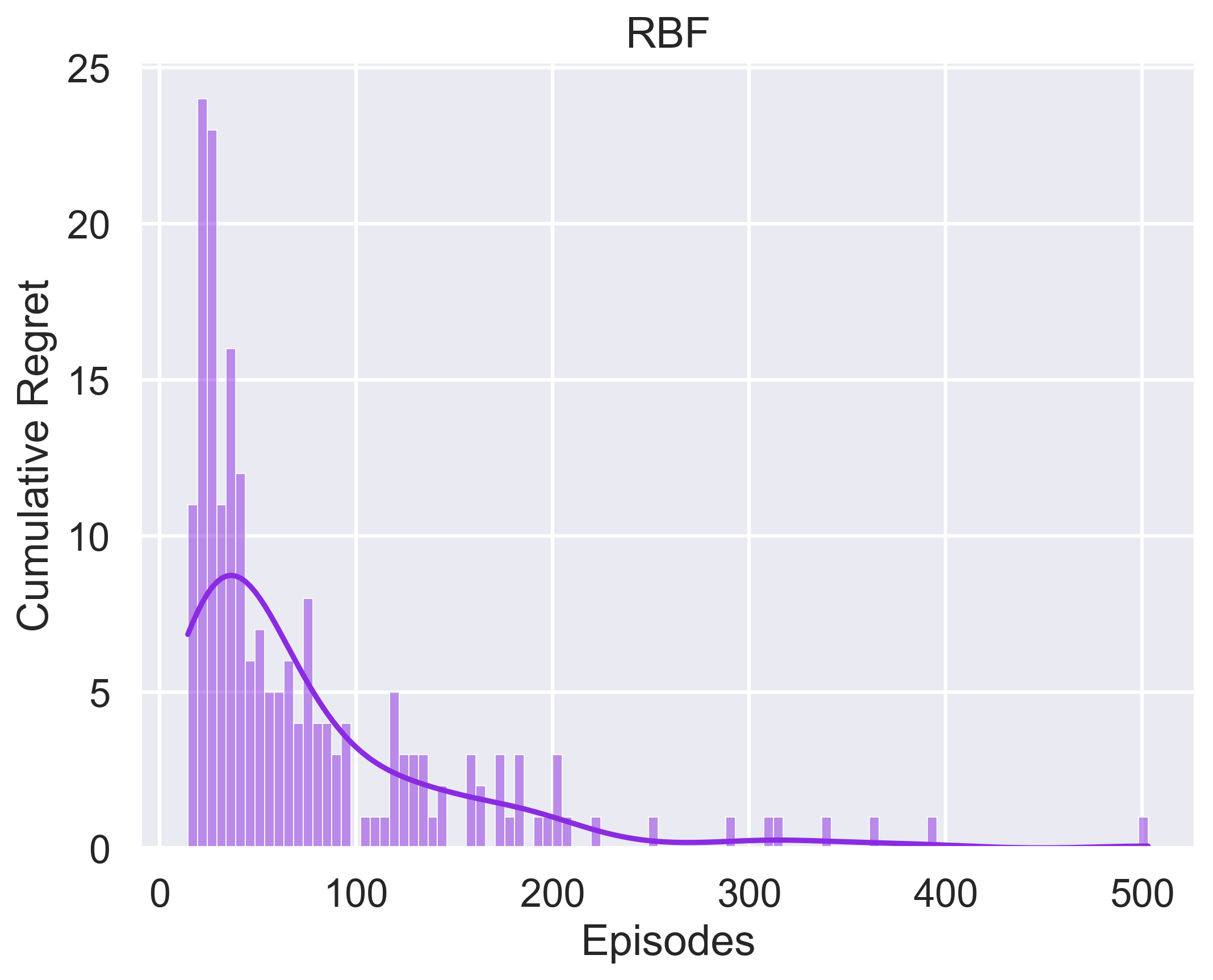}
    \end{subfigure}
    \hspace{-0.01\textwidth}
    \begin{subfigure}[b]{0.33\textwidth}
        \centering
        \includegraphics[width=\textwidth]{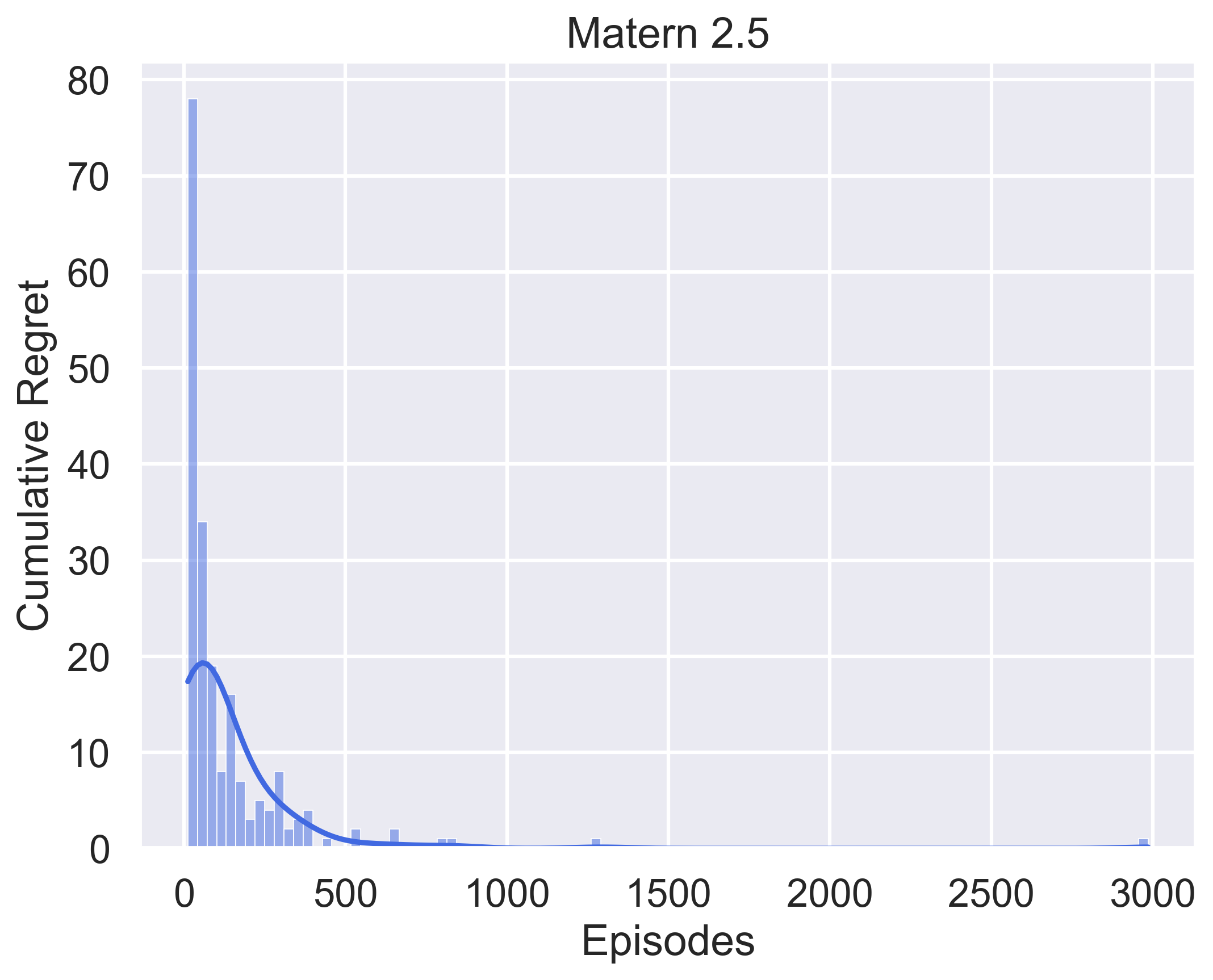}
    \end{subfigure}
    \hspace{-0.01\textwidth}
    \begin{subfigure}[b]{0.33\textwidth}
        \centering
        \includegraphics[width=\textwidth]{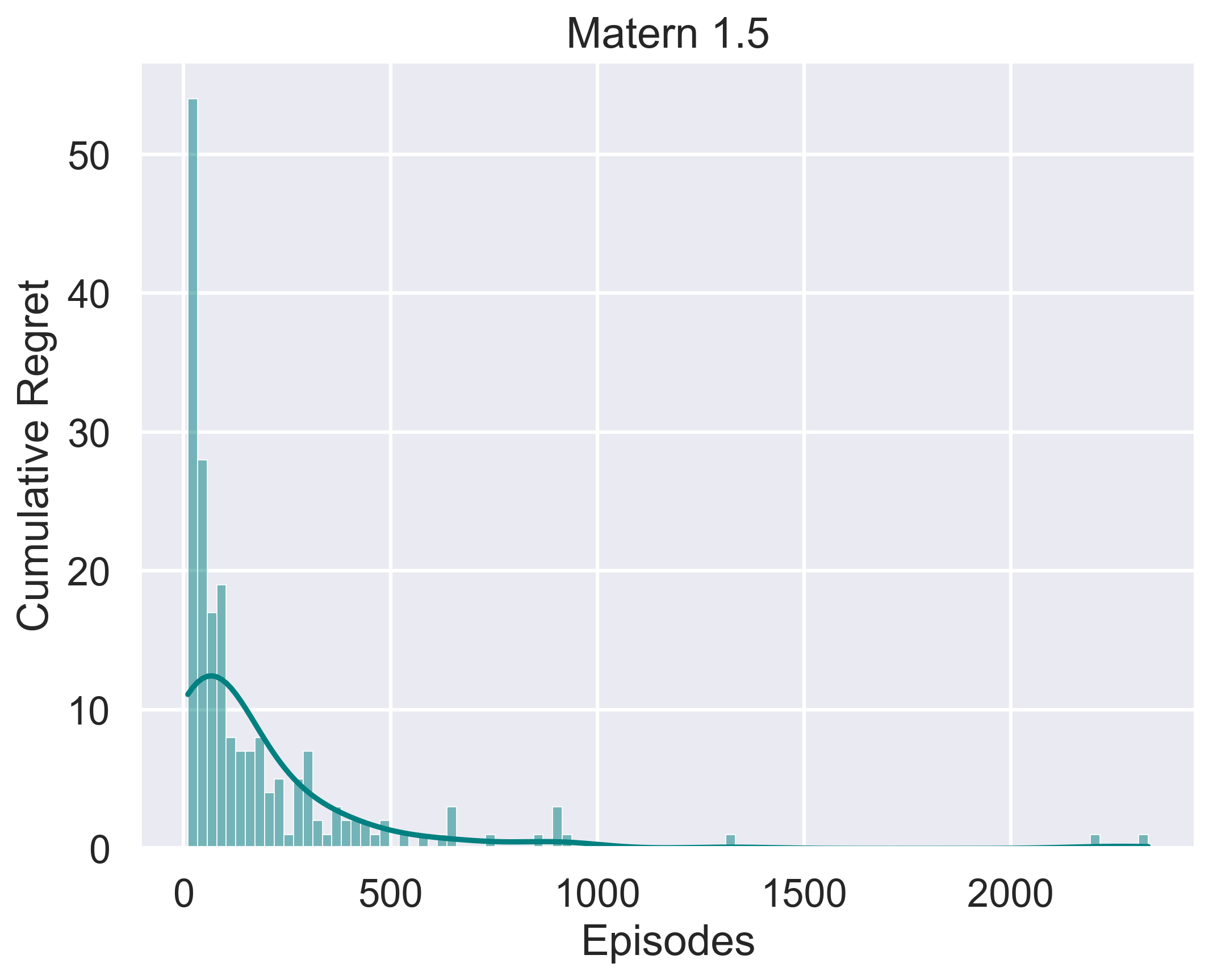}
    \end{subfigure}
    \centering
    \begin{subfigure}[b]{0.33\textwidth}
        \centering
        \includegraphics[width=\textwidth]{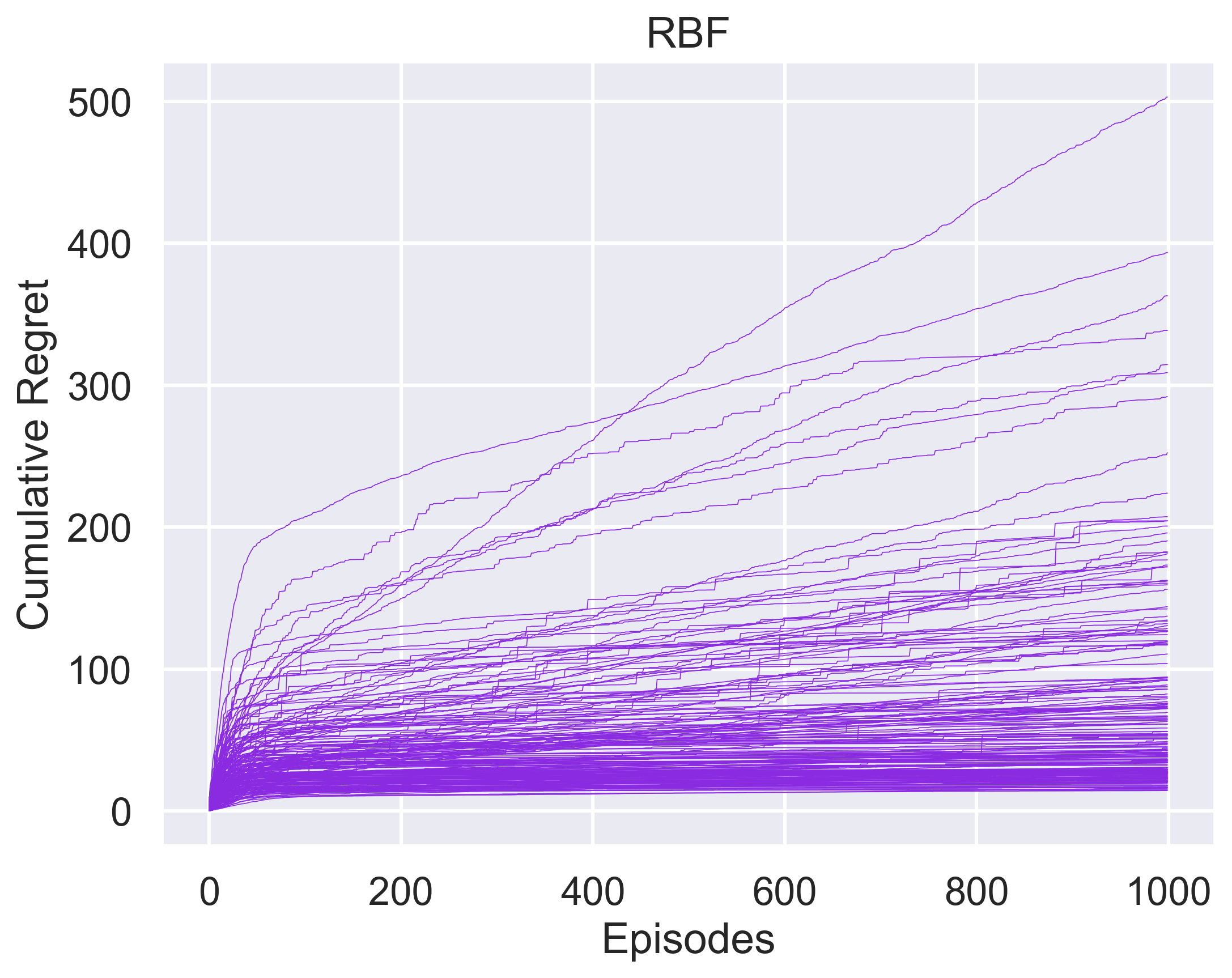}
    \end{subfigure}
    \hspace{-0.01\textwidth}
    \begin{subfigure}[b]{0.33\textwidth}
        \centering
        \includegraphics[width=\textwidth]{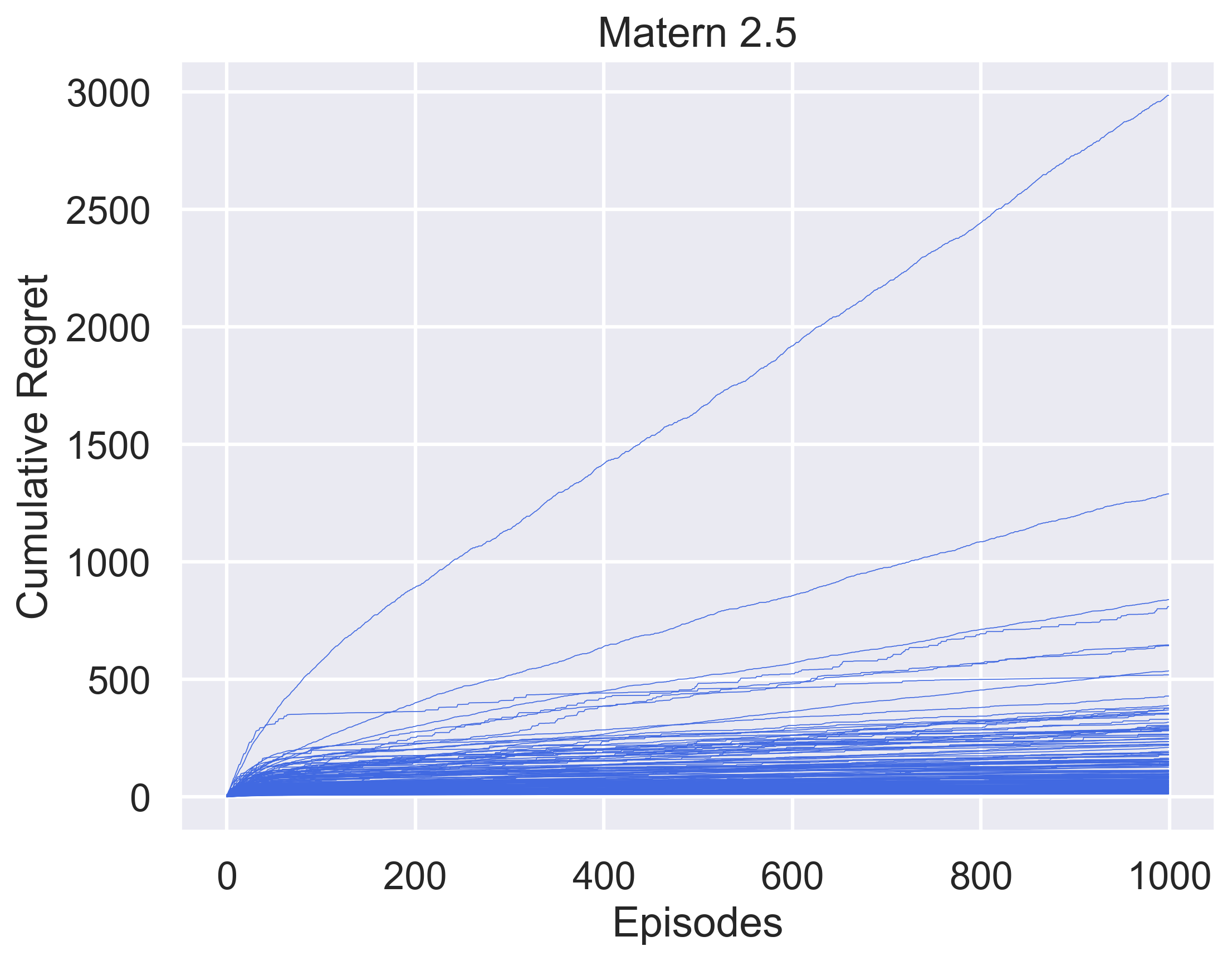}
    \end{subfigure}
    \hspace{-0.01\textwidth}
    \begin{subfigure}[b]{0.33\textwidth}
        \centering
        \includegraphics[width=\textwidth]{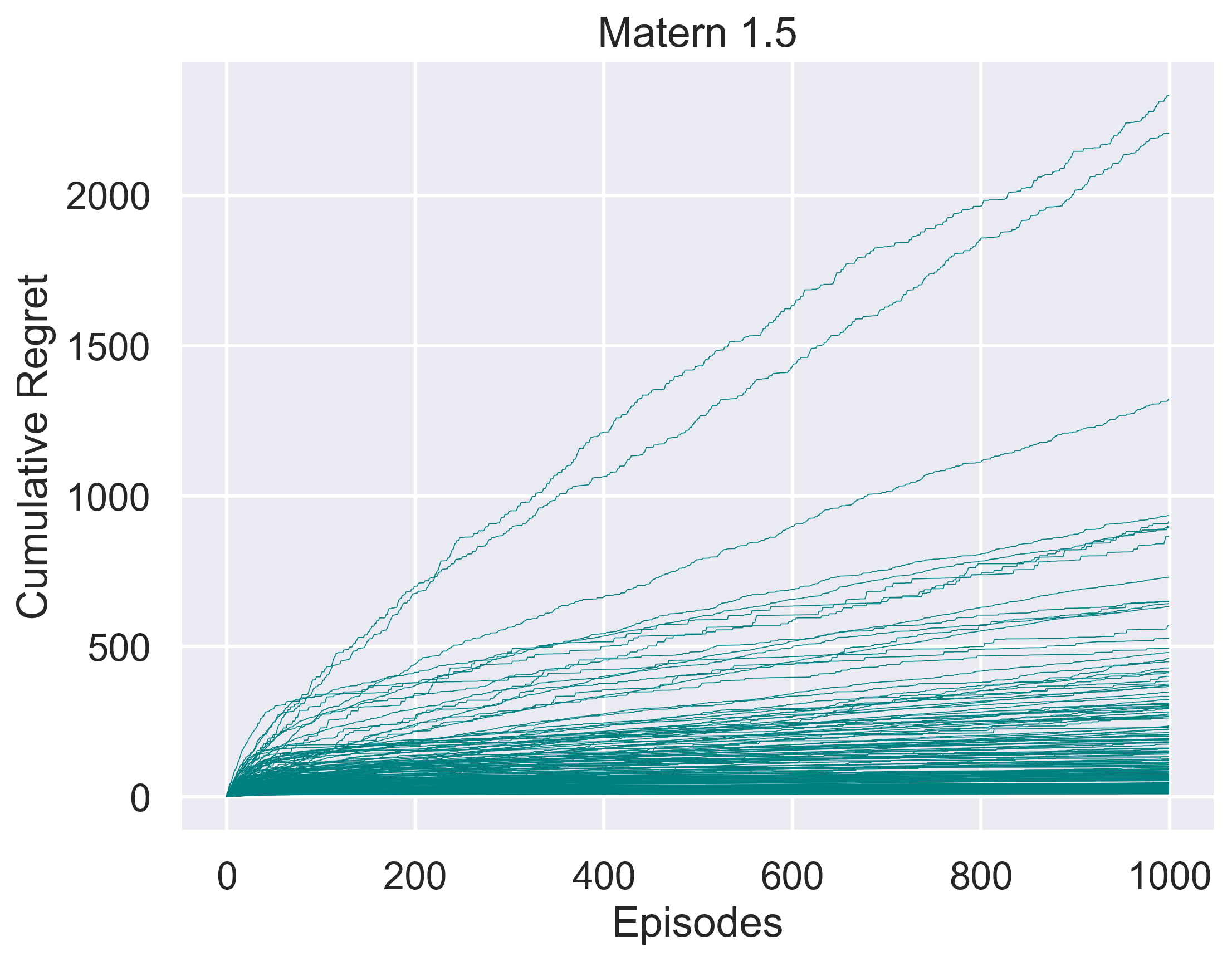}
    \end{subfigure}
    \caption{The first row shows the histogram of cumulative regrets over all trials on the last episode of RL-GPS training with each kernel. The second row shows the regret growth curves from all trials for each kernel.}
    \label{fig:extra_fixed_gp_exps}
\end{figure}

\paragraph{Multi-output kernel structure.} For this set of experiments, we train the multi-output GP using a Mat{\'e}rn kernel with $\nu=1.5$, comparing two modeling approaches: independent GPs and the linear model of coregionalization (LMC). When using LMC, the mixing weights are learned during training, so the randomness across trials arises only from the random initialization of the mixing weights and the starting states for rollouts. In contrast to the earlier experiments, the environment itself is fixed across all trials. For the unconstrained navigation experiments, the horizon used is $H=20$ and the GP is updated for 20 optimization steps at each iteration. For the maze experiments, the horizon used is also $H=20$ and, due to the increased modeling difficulty, the GP is updated for 50 optimization steps at each iteration. All trials for each unconstrained navigation experiment run within 24 hours on one NVIDIA GeForce RTX 2080 Ti GPU. Due to the greater number of GP updates in the maze setting, trials take longer to run and all 200 trials complete within 80 hours on the same hardware. Note that all experiments are easily parallelizable.

\newpage
\section*{NeurIPS Paper Checklist}

\begin{enumerate}

\item {\bf Claims}
    \item[] Question: Do the main claims made in the abstract and introduction accurately reflect the paper's contributions and scope?
    \item[] Answer: \answerYes{} 
    \item[] Justification: We state the paper's main contributions in the abstract and introduction, including the problem setting, model, and algorithm considered. Our claims in the abstract and introduction reflect the content and scope of the paper and are supported by both theoretical analysis and empirical evaluation.
    \item[] Guidelines:
    \begin{itemize}
        \item The answer NA means that the abstract and introduction do not include the claims made in the paper.
        \item The abstract and/or introduction should clearly state the claims made, including the contributions made in the paper and important assumptions and limitations. A No or NA answer to this question will not be perceived well by the reviewers. 
        \item The claims made should match theoretical and experimental results, and reflect how much the results can be expected to generalize to other settings. 
        \item It is fine to include aspirational goals as motivation as long as it is clear that these goals are not attained by the paper. 
    \end{itemize}

\item {\bf Limitations}
    \item[] Question: Does the paper discuss the limitations of the work performed by the authors?
    \item[] Answer: \answerYes{} 
    \item[] Justification: We discuss the limitations of our analysis in the limitations paragraph in the conclusion.
    \item[] Guidelines:
    \begin{itemize}
        \item The answer NA means that the paper has no limitation while the answer No means that the paper has limitations, but those are not discussed in the paper. 
        \item The authors are encouraged to create a separate "Limitations" section in their paper.
        \item The paper should point out any strong assumptions and how robust the results are to violations of these assumptions (e.g., independence assumptions, noiseless settings, model well-specification, asymptotic approximations only holding locally). The authors should reflect on how these assumptions might be violated in practice and what the implications would be.
        \item The authors should reflect on the scope of the claims made, e.g., if the approach was only tested on a few datasets or with a few runs. In general, empirical results often depend on implicit assumptions, which should be articulated.
        \item The authors should reflect on the factors that influence the performance of the approach. For example, a facial recognition algorithm may perform poorly when image resolution is low or images are taken in low lighting. Or a speech-to-text system might not be used reliably to provide closed captions for online lectures because it fails to handle technical jargon.
        \item The authors should discuss the computational efficiency of the proposed algorithms and how they scale with dataset size.
        \item If applicable, the authors should discuss possible limitations of their approach to address problems of privacy and fairness.
        \item While the authors might fear that complete honesty about limitations might be used by reviewers as grounds for rejection, a worse outcome might be that reviewers discover limitations that aren't acknowledged in the paper. The authors should use their best judgment and recognize that individual actions in favor of transparency play an important role in developing norms that preserve the integrity of the community. Reviewers will be specifically instructed to not penalize honesty concerning limitations.
    \end{itemize}

\item {\bf Theory assumptions and proofs}
    \item[] Question: For each theoretical result, does the paper provide the full set of assumptions and a complete (and correct) proof?
    \item[] Answer: \answerYes{} 
    \item[] Justification: We state the assumptions for all theoretical results in Section \ref{sec:anal} and provide the complete proofs in Appendices \ref{proof:the:v_conf_bound}, \ref{proof:the:regret_bound}, and \ref{app:auxiliary}.
    \item[] Guidelines:
    \begin{itemize}
        \item The answer NA means that the paper does not include theoretical results. 
        \item All the theorems, formulas, and proofs in the paper should be numbered and cross-referenced.
        \item All assumptions should be clearly stated or referenced in the statement of any theorems.
        \item The proofs can either appear in the main paper or the supplemental material, but if they appear in the supplemental material, the authors are encouraged to provide a short proof sketch to provide intuition. 
        \item Inversely, any informal proof provided in the core of the paper should be complemented by formal proofs provided in appendix or supplemental material.
        \item Theorems and Lemmas that the proof relies upon should be properly referenced. 
    \end{itemize}

    \item {\bf Experimental result reproducibility}
    \item[] Question: Does the paper fully disclose all the information needed to reproduce the main experimental results of the paper to the extent that it affects the main claims and/or conclusions of the paper (regardless of whether the code and data are provided or not)?
    \item[] Answer: \answerYes{} 
    \item[] Justification: We describe the algorithm in Section~\ref{sec:alg}, experimental setup in Section~\ref{sec:exp}, and full details in Appendix~\ref{app:experiments}. Furthermore, we provide the code as supplementary material so our results are reproducible.
    \item[] Guidelines:
    \begin{itemize}
        \item The answer NA means that the paper does not include experiments.
        \item If the paper includes experiments, a No answer to this question will not be perceived well by the reviewers: Making the paper reproducible is important, regardless of whether the code and data are provided or not.
        \item If the contribution is a dataset and/or model, the authors should describe the steps taken to make their results reproducible or verifiable. 
        \item Depending on the contribution, reproducibility can be accomplished in various ways. For example, if the contribution is a novel architecture, describing the architecture fully might suffice, or if the contribution is a specific model and empirical evaluation, it may be necessary to either make it possible for others to replicate the model with the same dataset, or provide access to the model. In general. releasing code and data is often one good way to accomplish this, but reproducibility can also be provided via detailed instructions for how to replicate the results, access to a hosted model (e.g., in the case of a large language model), releasing of a model checkpoint, or other means that are appropriate to the research performed.
        \item While NeurIPS does not require releasing code, the conference does require all submissions to provide some reasonable avenue for reproducibility, which may depend on the nature of the contribution. For example
        \begin{enumerate}
            \item If the contribution is primarily a new algorithm, the paper should make it clear how to reproduce that algorithm.
            \item If the contribution is primarily a new model architecture, the paper should describe the architecture clearly and fully.
            \item If the contribution is a new model (e.g., a large language model), then there should either be a way to access this model for reproducing the results or a way to reproduce the model (e.g., with an open-source dataset or instructions for how to construct the dataset).
            \item We recognize that reproducibility may be tricky in some cases, in which case authors are welcome to describe the particular way they provide for reproducibility. In the case of closed-source models, it may be that access to the model is limited in some way (e.g., to registered users), but it should be possible for other researchers to have some path to reproducing or verifying the results.
        \end{enumerate}
    \end{itemize}

\item {\bf Open access to data and code}
    \item[] Question: Does the paper provide open access to the data and code, with sufficient instructions to faithfully reproduce the main experimental results, as described in supplemental material?
    \item[] Answer: \answerYes{} 
    \item[] Justification: We provide the code as supplementary material and the hyperparameters in Appendix~\ref{app:experiments}. The code is documented with instructions to reproduce the experimental results.
    \item[] Guidelines:
    \begin{itemize}
        \item The answer NA means that paper does not include experiments requiring code.
        \item Please see the NeurIPS code and data submission guidelines (\url{https://nips.cc/public/guides/CodeSubmissionPolicy}) for more details.
        \item While we encourage the release of code and data, we understand that this might not be possible, so “No” is an acceptable answer. Papers cannot be rejected simply for not including code, unless this is central to the contribution (e.g., for a new open-source benchmark).
        \item The instructions should contain the exact command and environment needed to run to reproduce the results. See the NeurIPS code and data submission guidelines (\url{https://nips.cc/public/guides/CodeSubmissionPolicy}) for more details.
        \item The authors should provide instructions on data access and preparation, including how to access the raw data, preprocessed data, intermediate data, and generated data, etc.
        \item The authors should provide scripts to reproduce all experimental results for the new proposed method and baselines. If only a subset of experiments are reproducible, they should state which ones are omitted from the script and why.
        \item At submission time, to preserve anonymity, the authors should release anonymized versions (if applicable).
        \item Providing as much information as possible in supplemental material (appended to the paper) is recommended, but including URLs to data and code is permitted.
    \end{itemize}

\item {\bf Experimental setting/details}
    \item[] Question: Does the paper specify all the training and test details (e.g., data splits, hyperparameters, how they were chosen, type of optimizer, etc.) necessary to understand the results?
    \item[] Answer: \answerYes{} 
    \item[] Justification: All the core experiment details are described in Section~\ref{sec:exp} of the main paper and the comprehensive details are provided in Appendix~\ref{app:experiments}.
    \item[] Guidelines:
    \begin{itemize}
        \item The answer NA means that the paper does not include experiments.
        \item The experimental setting should be presented in the core of the paper to a level of detail that is necessary to appreciate the results and make sense of them.
        \item The full details can be provided either with the code, in appendix, or as supplemental material.
    \end{itemize}

\item {\bf Experiment statistical significance}
    \item[] Question: Does the paper report error bars suitably and correctly defined or other appropriate information about the statistical significance of the experiments?
    \item[] Answer: \answerYes{} 
    \item[] Justification: We report the mean performance with standard error over 200 trials in our main plots. In Appendix~\ref{app:experiments}, we provide more insights into the variance over trials by visualizing each individual curve.
    \item[] Guidelines:
    \begin{itemize}
        \item The answer NA means that the paper does not include experiments.
        \item The authors should answer "Yes" if the results are accompanied by error bars, confidence intervals, or statistical significance tests, at least for the experiments that support the main claims of the paper.
        \item The factors of variability that the error bars are capturing should be clearly stated (for example, train/test split, initialization, random drawing of some parameter, or overall run with given experimental conditions).
        \item The method for calculating the error bars should be explained (closed form formula, call to a library function, bootstrap, etc.)
        \item The assumptions made should be given (e.g., Normally distributed errors).
        \item It should be clear whether the error bar is the standard deviation or the standard error of the mean.
        \item It is OK to report 1-sigma error bars, but one should state it. The authors should preferably report a 2-sigma error bar than state that they have a 96\% CI, if the hypothesis of Normality of errors is not verified.
        \item For asymmetric distributions, the authors should be careful not to show in tables or figures symmetric error bars that would yield results that are out of range (e.g. negative error rates).
        \item If error bars are reported in tables or plots, The authors should explain in the text how they were calculated and reference the corresponding figures or tables in the text.
    \end{itemize}

\item {\bf Experiments compute resources}
    \item[] Question: For each experiment, does the paper provide sufficient information on the computer resources (type of compute workers, memory, time of execution) needed to reproduce the experiments?
    \item[] Answer: \answerYes{} 
    \item[] Justification: We give details on the compute resources required to reproduce the experiments in Appendix~\ref{app:experiments}.
    \item[] Guidelines:
    \begin{itemize}
        \item The answer NA means that the paper does not include experiments.
        \item The paper should indicate the type of compute workers CPU or GPU, internal cluster, or cloud provider, including relevant memory and storage.
        \item The paper should provide the amount of compute required for each of the individual experimental runs as well as estimate the total compute. 
        \item The paper should disclose whether the full research project required more compute than the experiments reported in the paper (e.g., preliminary or failed experiments that didn't make it into the paper). 
    \end{itemize}
    
\item {\bf Code of ethics}
    \item[] Question: Does the research conducted in the paper conform, in every respect, with the NeurIPS Code of Ethics \url{https://neurips.cc/public/EthicsGuidelines}?
    \item[] Answer: \answerYes{} 
    \item[] Justification: Our research conforms in every respect with the NeurIPS Code of Ethics.
    \item[] Guidelines: 
    \begin{itemize}
        \item The answer NA means that the authors have not reviewed the NeurIPS Code of Ethics.
        \item If the authors answer No, they should explain the special circumstances that require a deviation from the Code of Ethics.
        \item The authors should make sure to preserve anonymity (e.g., if there is a special consideration due to laws or regulations in their jurisdiction).
    \end{itemize}

\item {\bf Broader impacts}
    \item[] Question: Does the paper discuss both potential positive societal impacts and negative societal impacts of the work performed?
    \item[] Answer: \answerNA{} 
    \item[] Justification: This work provides theoretical no-regret guarantees for Thompson Sampling in reinforcement learning and is not tied to any specific application or deployment. As such, it does not directly introduce new societal risks beyond those already associated with reinforcement learning more broadly, ranging from benefits for healthcare to risks in safety-critical systems.
    \item[] Guidelines:
    \begin{itemize}
        \item The answer NA means that there is no societal impact of the work performed.
        \item If the authors answer NA or No, they should explain why their work has no societal impact or why the paper does not address societal impact.
        \item Examples of negative societal impacts include potential malicious or unintended uses (e.g., disinformation, generating fake profiles, surveillance), fairness considerations (e.g., deployment of technologies that could make decisions that unfairly impact specific groups), privacy considerations, and security considerations.
        \item The conference expects that many papers will be foundational research and not tied to particular applications, let alone deployments. However, if there is a direct path to any negative applications, the authors should point it out. For example, it is legitimate to point out that an improvement in the quality of generative models could be used to generate deepfakes for disinformation. On the other hand, it is not needed to point out that a generic algorithm for optimizing neural networks could enable people to train models that generate Deepfakes faster.
        \item The authors should consider possible harms that could arise when the technology is being used as intended and functioning correctly, harms that could arise when the technology is being used as intended but gives incorrect results, and harms following from (intentional or unintentional) misuse of the technology.
        \item If there are negative societal impacts, the authors could also discuss possible mitigation strategies (e.g., gated release of models, providing defenses in addition to attacks, mechanisms for monitoring misuse, mechanisms to monitor how a system learns from feedback over time, improving the efficiency and accessibility of ML).
    \end{itemize}
    
\item {\bf Safeguards}
    \item[] Question: Does the paper describe safeguards that have been put in place for responsible release of data or models that have a high risk for misuse (e.g., pretrained language models, image generators, or scraped datasets)?
    \item[] Answer: \answerNA{} 
    \item[] Justification: We do not use or release data or models with a high risk for misuse.
    \item[] Guidelines:
    \begin{itemize}
        \item The answer NA means that the paper poses no such risks.
        \item Released models that have a high risk for misuse or dual-use should be released with necessary safeguards to allow for controlled use of the model, for example by requiring that users adhere to usage guidelines or restrictions to access the model or implementing safety filters. 
        \item Datasets that have been scraped from the Internet could pose safety risks. The authors should describe how they avoided releasing unsafe images.
        \item We recognize that providing effective safeguards is challenging, and many papers do not require this, but we encourage authors to take this into account and make a best faith effort.
    \end{itemize}

\item {\bf Licenses for existing assets}
    \item[] Question: Are the creators or original owners of assets (e.g., code, data, models), used in the paper, properly credited and are the license and terms of use explicitly mentioned and properly respected?
    \item[] Answer: \answerYes{} 
    \item[] Justification: We cite all creators whose code we use in our experiments.
    \item[] Guidelines:
    \begin{itemize}
        \item The answer NA means that the paper does not use existing assets.
        \item The authors should cite the original paper that produced the code package or dataset.
        \item The authors should state which version of the asset is used and, if possible, include a URL.
        \item The name of the license (e.g., CC-BY 4.0) should be included for each asset.
        \item For scraped data from a particular source (e.g., website), the copyright and terms of service of that source should be provided.
        \item If assets are released, the license, copyright information, and terms of use in the package should be provided. For popular datasets, \url{paperswithcode.com/datasets} has curated licenses for some datasets. Their licensing guide can help determine the license of a dataset.
        \item For existing datasets that are re-packaged, both the original license and the license of the derived asset (if it has changed) should be provided.
        \item If this information is not available online, the authors are encouraged to reach out to the asset's creators.
    \end{itemize}

\item {\bf New assets}
    \item[] Question: Are new assets introduced in the paper well documented and is the documentation provided alongside the assets?
    \item[] Answer: \answerYes{} 
    \item[] Justification: We provide our code as supplementary materials along with documentation explaining how to install and run the code.
    \item[] Guidelines:
    \begin{itemize}
        \item The answer NA means that the paper does not release new assets.
        \item Researchers should communicate the details of the dataset/code/model as part of their submissions via structured templates. This includes details about training, license, limitations, etc. 
        \item The paper should discuss whether and how consent was obtained from people whose asset is used.
        \item At submission time, remember to anonymize your assets (if applicable). You can either create an anonymized URL or include an anonymized zip file.
    \end{itemize}

\item {\bf Crowdsourcing and research with human subjects}
    \item[] Question: For crowdsourcing experiments and research with human subjects, does the paper include the full text of instructions given to participants and screenshots, if applicable, as well as details about compensation (if any)? 
    \item[] Answer: \answerNA{} 
    \item[] Justification: The paper does not involve crowdsourcing nor research with human subjects.
    \item[] Guidelines:
    \begin{itemize}
        \item The answer NA means that the paper does not involve crowdsourcing nor research with human subjects.
        \item Including this information in the supplemental material is fine, but if the main contribution of the paper involves human subjects, then as much detail as possible should be included in the main paper. 
        \item According to the NeurIPS Code of Ethics, workers involved in data collection, curation, or other labor should be paid at least the minimum wage in the country of the data collector. 
    \end{itemize}

\item {\bf Institutional review board (IRB) approvals or equivalent for research with human subjects}
    \item[] Question: Does the paper describe potential risks incurred by study participants, whether such risks were disclosed to the subjects, and whether Institutional Review Board (IRB) approvals (or an equivalent approval/review based on the requirements of your country or institution) were obtained?
    \item[] Answer: \answerNA{} 
    \item[] Justification: The paper does not involve crowdsourcing nor research with human subjects.
    \item[] Guidelines:
    \begin{itemize}
        \item The answer NA means that the paper does not involve crowdsourcing nor research with human subjects.
        \item Depending on the country in which research is conducted, IRB approval (or equivalent) may be required for any human subjects research. If you obtained IRB approval, you should clearly state this in the paper. 
        \item We recognize that the procedures for this may vary significantly between institutions and locations, and we expect authors to adhere to the NeurIPS Code of Ethics and the guidelines for their institution. 
        \item For initial submissions, do not include any information that would break anonymity (if applicable), such as the institution conducting the review.
    \end{itemize}

\item {\bf Declaration of LLM usage}
    \item[] Question: Does the paper describe the usage of LLMs if it is an important, original, or non-standard component of the core methods in this research? Note that if the LLM is used only for writing, editing, or formatting purposes and does not impact the core methodology, scientific rigorousness, or originality of the research, declaration is not required.
    \item[] Answer: \answerNA{} 
    \item[] Justification: The core method development in this work does not involve LLMs as any important, original, or non-standard components.
    \item[] Guidelines:
    \begin{itemize}
        \item The answer NA means that the core method development in this research does not involve LLMs as any important, original, or non-standard components.
        \item Please refer to our LLM policy (\url{https://neurips.cc/Conferences/2025/LLM}) for what should or should not be described.
    \end{itemize}

\end{enumerate}

\end{document}